\newcommand{\field}[1]{\mathbb{#1}}
\newcommand{\power}{\mathscr{P}}
\newcommand{\set}[1]{\mathcal{#1}}
\newcommand{\nats}{\field{N}}
\newcommand{\define}{:=}
\newcommand{\comment}[1]{}
\newcommand{\sfield}{\set{F}}        
\newcommand{\prob}{\mathsf{P}}      
\newcommand{\expect}{\mathsf{E}}               
\theoremstyle{plain}
\newtheorem{theorem}{Theorem}
\newtheorem{lemma}[theorem]{Lemma}
\newtheorem{proposition}[theorem]{Proposition}
\newtheorem{corollary}[theorem]{Corollary}
\theoremstyle{definition}
\newtheorem{definition}[theorem]{Definition}
\begin{document}

\title{Subjectivity, Bayesianism, and Causality}
\author{Pedro~A.~Ortega \thanks{Date: 24 April 2015---Email: ope@seas.upenn.edu---Address: School of Engineering and Applied Sciences, University of Pennsylvania, 220 S 33rd St, Philadelphia, PA 19104, U.S.A.}}
\date{July 15, 2014}

\maketitle

\begin{abstract}
Bayesian probability theory is one of the most successful frameworks 
to model reasoning under uncertainty. Its defining property is 
the interpretation of probabilities as degrees of belief in 
propositions about the state of the world relative to an
inquiring subject. This essay examines the notion of subjectivity by
drawing parallels between Lacanian theory and Bayesian probability theory,
and concludes that the latter must be enriched with causal interventions
to model agency. The central contribution of this work is an abstract
model of the subject that accommodates causal interventions in a 
measure-theoretic formalisation.
This formalisation is obtained through a game-theoretic \textit{Ansatz} based
on modelling the inside and outside of the subject as an extensive-form game 
with imperfect information between two players. Finally, I illustrate
the expressiveness of this model with an example of causal induction.
\end{abstract}

\begin{IEEEkeywords}
Subjectivity; Bayesian Probability Theory; Causality
\end{IEEEkeywords}

\section{Introduction}

\IEEEPARstart{E}{arly} modern thinkers of the Enlightenment---spurred by the developments of empirical
science, modern political organisation, and the shift from collective religion to
personal cults---found in the free, autonomous, and rational \emph{subject} the
\textit{locus} on which to ground all of knowledge \citep{Mansfield2000}.
Most notably, Descartes, with his axiom \textit{cogito ergo sum} (`I think, therefore I am'),
put forward the idea that the thought process of the subject is an unquestionable
fact from which all other realities derive---in particular of oneself, and in general
of everything else \citep{Descartes1637}. 

This proposition initiated a long-lasting debate among philosophers such as Rousseau
and Kant, and its discussion played a fundamental r\^{o}le in shaping modern 
Western thought. Indeed, the concept 
of the subject operates at the heart of our core institutions: the legal and political 
organisation rests on the assumption of the free and autonomous subject
for matters of responsibility of action and legitimisation of ruling bodies; 
capitalism, the predominant economic system, depends on forming, through the 
tandem system of education and marketing, subjects that engage in work and 
consumerism \citep{Burkitt2008}; natural sciences equate objective truth 
with inter-subjective experience \citep{Kim2005}; and so forth.

Nowadays, questions about subjectivity are experiencing renewed interest 
from the scientific and technological communities. Recent technological 
advances, such as the availability of massive and ubiquitous computational
capacity, the internet, and improved robotic systems, have triggered the 
proliferation of autonomous systems that monitor, process and deploy 
information at a scale and extension that is unprecedented in history.
Today we have social networks that track user preferences 
and deliver personalised mass media, algorithmic trading systems that account
for a large proportion of the trades at stock exchanges, unmanned vehicles 
that navigate and map unexplored terrain. What are the ``users'' that
a social network aims to model? What does an autonomous system know and what
can it learn? Can an algorithm be held responsible for its actions?
Furthermore, latest progress in neuroscience has both posed novel questions
and revived old ones, ranging from investigating the neural bases of 
perception, learning, and decision making, to understanding the nature 
of free will \citep{Sejnowski2006}. Before these questions can be addressed in a way that is
adequate for the mathematical disciplines, it is necessary
to clarify what is meant by a subject in a way that enables
a quantitative discussion.

The program of this essay is threefold. First, I will argue
that Bayesian probability theory is a subjectivist theory, encoding
many of our implicit cultural assumptions about subjectivity. To support 
this claim, I will show that some basic concepts in Bayesian 
probability theory have a counterpart in Lacanian theory, which
is used in cultural studies as a conceptual framework to structure
the discourse about subjectivity. In the second part, I will put 
forward the claim that Bayesian probability theory needs to be enriched 
with causal interventions to model agency. Finally, I will consolidate 
the ideas on subjectivity in an abstract mathematical synthesis. The 
main contribution of this formalisation is the measure-theoretic
generalisation of causal interventions.

\section{Subjectivity in Lacanian Theory}

To artificial intelligence, statistics, and economics, the 
questions about subjectivity are not novel at all: many can be
traced back to the early discussions at the beginning of the
twentieth century that eventually laid down the very foundations 
of these fields. Naturally, these ideas did not spring out of a
vacuum, but followed the general trends and paradigms of the time.
In particular, many of the fundamental concepts about subjectivity
seem to have emerged from interdisciplinary cross-fertilisation.

For instance, in the humanities, several theories of subjectivity were
proposed. These can be roughly subdivided into two dominant approaches 
\citep{Mansfield2000}: the \emph{subjectivist}/\emph{psychoanalytic} 
theories, mainly associated with Freud and Lacan, which see the subject 
as a \emph{thing} that can be conceptualised and studied 
\citep[see \textit{e.g.}\ ][]{Freud1899, Fink1996}; and the \emph{anti-subjectivist}
theories, mainly associated with the works of Nietzsche and Foucault, 
which regard any attempt at defining the subject as a \emph{tool} of 
social control, product of the culture and power of the time 
\citep{Nietzsche1887, Foucault1964}.

For our discussion, it is particularly useful to investigate
the relation to Lacan\footnote{It shall be noted however, that 
Lacan's work is notoriously difficult to understand, partly due 
to the complexity and constant revisions of his ideas, but most 
importantly due to his dense, multi-layered, and obscure prose 
style. As a result, the interpretation presented
here is based on my own reading of it, which was significantly
influenced by \citet{Fink1996}, \citet{Mansfield2000} and the
work by \v{Z}i\v{z}ek \citep{Zizek1992, Zizek2009}.}, firstly 
because it is a subjectivist theory and secondly because its
abstract nature facilitates establishing the relation to 
Bayesian probability theory. Some ideas that are especially 
relevant are the following.

\emph{The subject is a construct.} There is a consensus among
theorists (both subjectivist and anti-subjectivists) that the subject 
is not born into the world as a unified entity. Instead, her
constitution as a unit is progressively built as she experiences
the world \citep{Mansfield2000}. The specifics of this unity
vary across the different accounts, but roughly speaking,
they all take on the form of an acquired sense of separation
between a \textit{self} (inside) and the rest of the world (outside).
For instance, during the early
stages of their lives, children have to learn that their limbs belong
to them. In Lacan for instance, this distinction is embodied in the 
terms \emph{I} and \emph{the Other} (Fig.~\ref{fig:psychoanalytic-subject}a). 
Crucially, Lacan stresses that the subject is precisely this ``membrane''
between inward and outward flow \citep{Fink1996}.

\begin{figure}[htbp]
\centering %
\includegraphics[width=\columnwidth]{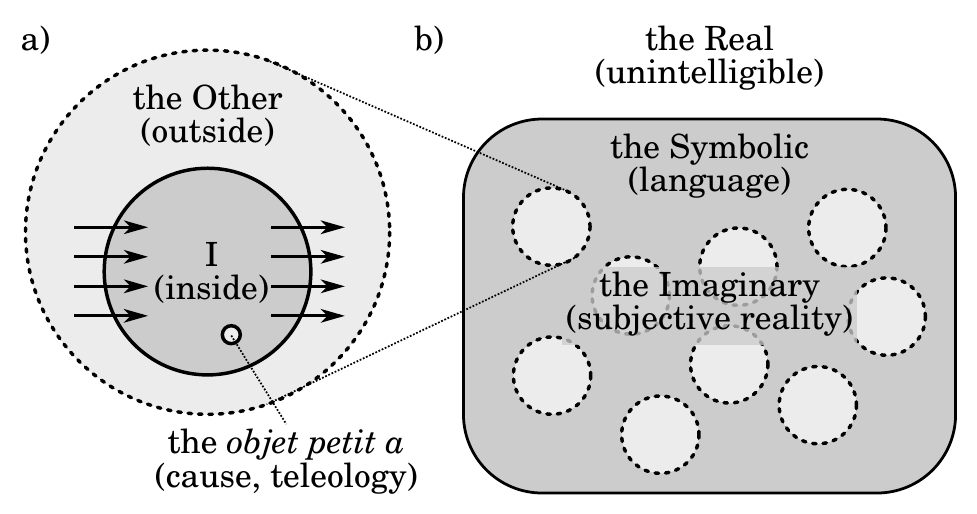}
\caption{The subject in Lacanian theory.}
\label{fig:psychoanalytic-subject} %
\end{figure}

\emph{The subject is split.} Structurally, the subject is divided
into a part that holds beliefs about the world, and a part 
that governs the organisation and dynamics of those beliefs
\emph{in an automatic fashion}. The most well-known instantiation of this idea
is the Freudian distinction between the \emph{conscious} 
and the \emph{unconscious}, where the latter constitutes
psychological material that is repressed, but nevertheless accessible
through dreams and involuntary manifestations such as a 
``slip of the tongue'' \citep{Freud1899}. Here however, the interpretation
that is more pertinent to our analysis is Lacan's. In
his terminology, the two aforementioned parts correspond 
to the \emph{imaginary} and the \emph{symbolic} registers respectively 
(Fig.~\ref{fig:psychoanalytic-subject}b). 
Simply put,  the imaginary can be described as the collection of concepts 
or images that, when pieced together, make up the totality of 
the subject's ontology: in particular, the world and the subject's 
sense of self. In other words, the imaginary register is 
responsible for entertaining hypotheses about reality. In turn, 
these images are organised by the symbolic register into a network 
of meaning that is pre-given, static, and ``structured like a language'' 
\citep{Lacan1953}.

\emph{Language is a system of signification.} Many of the modern
ideas about knowledge and subjectivity are centred around language. 
In this view, the subject is seen as a signifying entity that produces 
and consumes signs (linguistic material) in the form of spoken 
language, images, and general sensorimotor expression \citep{Saussure1916}.
Language then can be thought of as a system of signs that operates
by detecting signifiers (labels) and associating them to signifieds
(meanings or ideas)---possibly in cascade, with the signifieds being
the signifiers of later stages. Crucially, the associations between
signifiers and signifieds are arbitrary and contingent, established
by pure convention (think of `apple', `manzana', `mela', `Apfel',
`pomme', `\textcjheb{.hwpt}', \textit{etc.}). The influence of these views is witnessed 
by the adoption of related ideas by thinkers from fields ranging 
from logic \citep{Russell1905, Wittgenstein1921},
philosophy of language \citep{Wittgenstein1953}, phenomenology
\citep{Heidegger1927}, rhetoric \citep{Knape2000}, and 
linguistics/cognitivism \citep{Chomsky1957} to computer 
science \citep{Turing1936} and biology/cybernetics 
\citep{Maturana1970, Maturana1987}.

\emph{The real is the engine of the subject.} The imaginary and the
symbolic registers refer to the subject's intellect, that is, to the 
organisation of the things that she can potentially comprehend or experience,
and their structure is static. There is a third register in Lacan's 
conceptualisation, namely the \emph{real} (Fig.~\ref{fig:psychoanalytic-subject}b),
representing the unintelligible, random source of external perturbations 
that the subject picks up and integrates into her symbolic domain 
in the form of sense-data, thereby setting her knownledge in motion 
(compare \textit{e.g.}\ to the ``web of beliefs'' of \citet{Quine1951}).

\emph{Teleology.} Finally, there is the question of purposeful behaviour.
In Lacan, teleology (see~Fig.~\ref{fig:psychoanalytic-subject}a) is related to 
what he calls the \textit{objet petit a}: that is, an unexpected incoherence
that interrupts the otherwise regular chain of signification 
\citep{Lacan1973, Zizek1992}. Such an interruption has two consequences 
that are worth pointing out. First, the deviation from the regular 
chain of signification can be thought of as an expression
of spontaneous desire, \textit{i.e.}\ a sudden jerk that steers
the chain into different, \emph{preferred} consequences. Second, the interrupted
signifying chain, by injecting randomness, introduces an independence
of choice that entails a responsibility, a claim to ownership of cause, 
and a post-rationalisation of the subject's decisions. In short: 
a detected irregularity signals \emph{agency}. For instance, in the
sequence
\[
1,\quad 2,\quad 3,\quad 4,\quad 5,\quad
6,\quad 8,\quad 9,\quad 10,
\]
the missing number~$7$ breaks the pattern and can give the impression
that it was intentionally omitted.

\section{Subjectivity in Bayesian Probability Theory}\label{sec:subj-bayesian}

In the mathematical disciplines, one of the most prominent theories
dealing with subjectivity is Bayesian probability theory. Its current
formal incarnation came to be as a synthesis of many fields such 
as measure theory \citep[see \textit{e.g.}\ ][]{Lebesgue1902, Kolmogorov1933},
set theory \citep{Cantor1874}, and logic \citep{Frege1892, Russell1905,
Wittgenstein1921}. After Bayes' and Laplace's initial epistemic
usage of probabilities\nocite{Bayes1763, Laplace1774}, founders
of modern probability theory have \emph{explicitly} started
using probabilities as degrees of subjective belief. On one hand, they
have postulated that subjective probabilities can be inferred by observing
actions that reflect personal beliefs \citep{Ramsey1931, 
deFinetti1937, Savage1954}; on the other hand, they
regarded probabilities as extensions to logic under epistemic
limitations \citep{Cox1961, Jaynes2003}. Importantly, both
accounts rely on a subject that does statistics in the world
having belief updates governed by Bayes' rule. 

Bayesian probability theory, in its capacity as a subjectivist theory,
can be related to ideas in Lacanian theory. Recall that formally, probability 
theory provides axiomatic foundations for modelling experiments\footnote{Here, 
I will use the word \textit{experiment} in a very broad sense, 
including the thought processes of the subject throughout her entire life.} 
involving randomness. Such a \textit{randomised experiment}
takes the form of a probability space $(\Omega, \sfield, \prob)$, where $\Omega$ is a
set of possible states of nature, $\sfield$ is a $\sigma$-algebra on $\Omega$ 
(a collection of subsets of $\Omega$ that is closed under countably many set operations,
comprising complement, union, and intersection), and $\prob$ is a probability measure over $\sfield$.
Given this setup, I suggest the following correspondences, summarised in 
Tab.~\ref{tab:correspondences}:
\begin{enumerate}
\item \emph{Real $\leftrightarrow$ generative/true distribution.}
In probability theory, it is assumed that there exists
a source that secretly picks the state of Nature $\omega \in \Omega$
that is then progressively ``revealed'' through measurements.
Some measure theory textbooks even allude to the irrational, unintelligible
quality of the source\footnote{Note that this allusion goes at least as far back
as Hesiod's \textit{Theogony} dating from the pre-philosophical era. The \textit{Theogony}
describes the creation of the world by the \textit{muses}---a literary device
used at the time standing for something that is unintelligible.} 
by using the phrase ``Tyche, the goddess 
of chance, picks a sample'' to describe this choice 
\citep[see for instance][]{Billingsley1978, Williams1991}. 

\item \emph{Symbolic $\leftrightarrow$ probability space.}
Conceptually, the $\sigma$-algebra $\sfield$ of a probability space
contains the universe of all the yes/no questions (\textit{i.e.}\ propositions) that the subject
can entertain. A particular aspect of a given state of Nature
$\omega$ is extracted via a corresponding random variable 
$X:\Omega \rightarrow \mathcal{X}$, mapping $\omega$ into a
symbol $X(\omega)$ from a set of symbols $\mathcal{X}$.
Random variables can be combined to form complex aspects,
and the ensuing symbols are consistent (\textit{i.e.}\ free of contradictions)
as guaranteed by construction. Thus, a probability space and 
the associated collection of random variables make up the 
structure of the potential realities that the subject can 
hope to comprehend. Furthermore, one can associate to each 
random variable at least one of three r\^{o}les (but typically
just one), detailed next.

\item \emph{Imaginary $\leftrightarrow$ hypotheses.}
A random variable can play the r\^{o}le of a \emph{latent feature} of the
state of Nature. Latent variables furnish the sensorimotor space 
with a conceptual or signifying structure, and a particular
configuration of these variables constitutes a hypothesis
in the Bayesian sense. Because of this function, we can 
associate the collection of latent variables to Lacan's imaginary
register.

\item \emph{Flow between I and the Other $\leftrightarrow$ actions \& observations.}
The hypotheses by themselves do not ground the subject's symbolic
domain to any reality however---for this, variables modelling interactions
are required. These variables capture symbols that appear in the
sensorimotor stream of the subject, that is, at her boundary
with the world, modelling the directed symbolic flow occurring between
the I and the Other; in particular, the out- and inward flows are represented
by actions and observations respectively.

\item \emph{\textit{Objet petit a} $\leftrightarrow$ causal intervention.}
The last connection I would like to establish, which will become a central
theme in what follows, is between the \textit{object petit a} and causal
interventions. Lacanian theory explains agency in terms of a kink
in the signifying chain---that is, the interruption of a pre-existing relation
between two symbols---that is subjectivised \emph{in hindsight} \citep{Fink1996,
Zizek1992}. One crucial aspect of this notion is that it requires the 
comparison between two instants of the signifying network, namely the one 
where the relation is still intact and the resulting one where the relation 
is absent, adding a \emph{dynamic} element to the static symbolic order. 
This element has \emph{no} analogue in standard probability theory. 
However, the last twenty years have witnessed the systematic study of 
what appears to be an analogous idea in the context of probabilistic causality. 
More precisely, the interruption of the signifying chain is a causal intervention
\citep{Pearl2009}.
\end{enumerate}

\begin{table}[htbp]
\caption{Lacanian and Bayesian theories of the subject.} %
\label{tab:correspondences} %
\bigskip %
\centering %
\begin{tabular}{ccc}
  \toprule
  Lacan & & Bayes \\
  \midrule
  real (register) & $\longleftrightarrow$ & true distribution \\
  symbolic (register) & $\longleftrightarrow$ & probability space \\
  imaginary (register) & $\longleftrightarrow$ & hypotheses \\
  the Other $\rightarrow$ I (flow) & $\longleftrightarrow$ & observations \\
  I $\rightarrow$ the Other (flow) & $\longleftrightarrow$ & actions \\
  \textit{objet petit a} & $\longleftrightarrow$ & causal intervention \\
  \bottomrule
\end{tabular}
\end{table}

One can establish a few more connections, for instance between Lacan's concept
of \textit{jouissance} and the economic term \textit{utility}, but I hope that
the  aforementioned ones suffice to make my case for now.

In summary, my claim is that Bayesian probability theory is almost an axiomatic
subjectivist theory; ``almost'' because it lacks an analogue of the function 
performed by the \textit{objet petit a}, namely causal interventions, which 
is crucial to fully characterise the subject. This will be the goal of the next section.

\section{Causality and the Abstract Subject}

Thus, Bayesian probability theory can be taken as a mathematical theory of the subject 
that \emph{passively} draws logical and probabilistic inferences from experience.
To extend it to \emph{interactive} subjects, \textit{i.e.}\ subjects that can shape the course 
of realisation of a randomised experiment, it is necessary to introduce
additional formal machinery. Due to space limitations, the thorough
analysis of these requirements is deferred to Appendix~\ref{sec:probabilistic-causality}, 
and the resulting synthesis into a measure-theoretic model of the interactive
subject is presented in Appendix~\ref{sec:abstract-subject}. My goal here is
to give an informal summary of the main ideas and results found therein.

\subsection{Causality}

Causality has always been one of the central aspects 
of human explanation, with its first philosophical discussion dating back
to Aristotle's \textit{Physics} roughly some 2500 years ago. In spite
of this, it has not received much attention from the scientific community, 
partly due to the strong scepticism expressed by \citet{Hume1748} and later 
by prominent figures in statistics \citep{Pearson1899} and logic \citep{Russell1913}. 
It is only in the recent decades that philosophers and computer scientists 
have attempted to characterise causal knowledge in a rigorous way 
\citep{Suppes1970, Salmon1980, Rubin1974, Cartwright1983, Spirtes2001, Pearl2009, 
Woodward2001, Shafer1996, Dawid2007}. I refer the reader to \citet{Dawid2010}
for a mathematical, and \citet{IllariRusso2014} for a thorough philosophical comparison
between existing approaches.

Arguably, one of the central contributions has been Pearl's characterisation of
causal interventions \citep{Pearl1993, Pearl2009}, which in turn draws ideas
from \citet{Simon1977} and \citet{Spirtes2001}. Informally, a causal
intervention is conceived as a manipulation of the probability law
of a random experiment that functions by holding the value 
of a chosen random variable fixed. The operation has been formalised
in causal directed acyclic graphs (DAGs); structural models \citep{Pearl2009};
chain graphs \citep{Lauritzen2002}; influence diagrams \citep{Dawid2007}
and decision problems \citep{Dawid2015}, to mention some. Another approach 
that is worth pointing out is that of \citet{Shafer1996}. Therein, 
Shafer shows that simple probability trees are able to capture
very rich causal structures, although he does not define causal 
interventions on them. While the aforementioned 
definitions differ in their scope, interpretation, and degree of complexity, 
ultimately they entail transformations on probability measures that 
are mathematically consistent with each other.

\subsection{Causality in Games with Imperfect Information}

Interestingly though, one the earliest and most general formalisations
of what much later became known as causal interventions comes from game theory. 
More precisely, the characteristics underlying modern causal interventions 
feature implicitly in the representation of \textit{extensive-form games 
with imperfect information} \citep{Neumann1944,Osborne1999}. This connection
is straightforward but, rather surprisingly, rarely acknowledged 
in the literature. The game-theoretic \emph{Ansatz} yields an elegant 
definition that can express a rich set of causal dependencies, including
higher-order ones. Conceptually, the approach is similar to defining Pearl-type 
interventions on Shafer's probability trees, and it lends itself 
to a measure-theoretic formalisation.

In this set-up, the subject interprets her experience
as a sequential game between two players named \emph{I} and \emph{W} standing
for the I and the Other (\textit{i.e.}\ the World) respectively\footnote{Multiple ``Others''
are folded into a single player \emph{W}.}---that is, player \emph{I}'s moves
are the subject's actions, while \emph{W}'s moves are the subject's observations. 
The structure of the game embodies the causal narrative of the 
randomised experiment. Crucially, player \emph{W}'s otherness is expressed by
a special ability: it can play \emph{secret moves} that 
\emph{I} knows about but cannot see\footnote{Game theory here 
subtly assumes that players have a theory of mind, that is to say,
the presumption that other players possess private information or 
inaccessible ``mental'' states that direct their behaviour.}. 
As a consequence, causal assumptions manifest themselves 
as independence  relations that strictly limit player \emph{I}'s 
knowledge about player \emph{W}'s moves. The missing information 
has the effect of rendering some states in this game indistinguishable 
from each other in the eyes of player \emph{I}\footnote{In game-theoretic jargon, these states form
so-called \emph{information sets}.}. In contrast, player \emph{W}
is omniscient, being able to see all the moves in the game.

To further clarify the peculiarities of this rather unusual
set-up, it is helpful to make an analogy to dreams. In many of them, it is often
the case that we encounter other people to whom we talk. 
These conversations can be vivid and---most importantly---surprising. 
Yet, how come we cannot anticipate what others will say, given that these dreams
are entirely orchestrated by our own imagination? Somehow, our dreams 
go to great lengths to maintain this stable illusion of otherness 
of the people and the world by hiding their motives and reasons from us. 
That is, our imagination draws a demarcation line between the things 
done by ourselves (which do not surprise us) and the things done by others 
(which feel external and novel). To Lacan, this is no different
when we are awake. The imaginary register is responsible for maintaining
this illusion of the self and the world (players \emph{I} 
and \emph{W} respectively), and it does so within the confines of 
our language (\textit{i.e.}\ the symbolic, providing the rules of the game).

From a syntactic point of view, perhaps the two most important 
findings of the analysis in Appendix~\ref{sec:probabilistic-causality} are that, 
firstly, an interactive subject must distinguish between her 
\emph{actions} and \emph{observations} because they entail different 
belief updates, and secondly, that in order to do so, the subject 
must differentiate between an \emph{event}
(\textit{i.e.}\ a logical proposition about the experiment) and a \emph{realisation}
(\textit{i.e.}\ a possible state of the experiment). I refer the reader
to Appendix~\ref{sec:probabilistic-causality} for the details of this argument.

\subsection{Representation of Causal Dependencies}

To illustrate the set-up, consider the following classical example (with a twist), 
consisting of two random variables W and B representing the weather 
and the atmospheric pressure as measured by a mercury barometer. 
To simplify, assume that both W and B are binary, taking values in 
$\mathcal{W}=\{$sunny, rainy$\}$ and $\mathcal{B}=\{$low, high$\}$
respectively. From elementary physics, we know that the 
pairs $($sunny, low$)$ and $($rainy, high$)$ are more
likely than $($sunny, high$)$ and $($rainy, low$)$. 

Our subject (\textit{e.g.}\ an extraterrestrial visitor) does not know 
whether the weather controls the barometer's mercury
column or the other way around. Using the language of causal
DAGs, this situation can be represented as one where the subject
ponders the plausibility of the two competing causal hypotheses, say
H $= 1$ and H $= 2$, depicted in Fig.~\ref{fig:hypotheses}a, 
and the subject is challenged to \emph{induce} one from experience.
Crucially though, while each hypothesis can be represented as a 
separate graphical model, the causal dependencies governing the 
induction problem itself \emph{cannot} be cast into a single 
causal DAG. Since the direction of the arc between W 
and B is unknown, the subject has to treat H as a latent variable
that has to be inferred. Thus, we can tie the two hypotheses
into a single diagram by adding a node for the latent variable
as shown in Fig.~\ref{fig:hypotheses}b. However, the new diagram
is \emph{not} a causal DAG any more. Most importantly, 
the theory of causal DAGs does not specify how to do inference 
on this type of graph\footnote{Note that this problem persists when
using the language of \textit{structural equations} \citep{Pearl2009}, 
as the hypothesis~H controls whether to include either the structural
equation B $= g($W$, $U$)$ or W $= h($B$, $V$)$ into the model, 
but at the same time, inferences are only defined once all the 
structural equations of the system are in place.} \citep{Ortega2011b}.

\begin{figure}[htbp]
\centering %
\includegraphics[width=0.6\columnwidth]{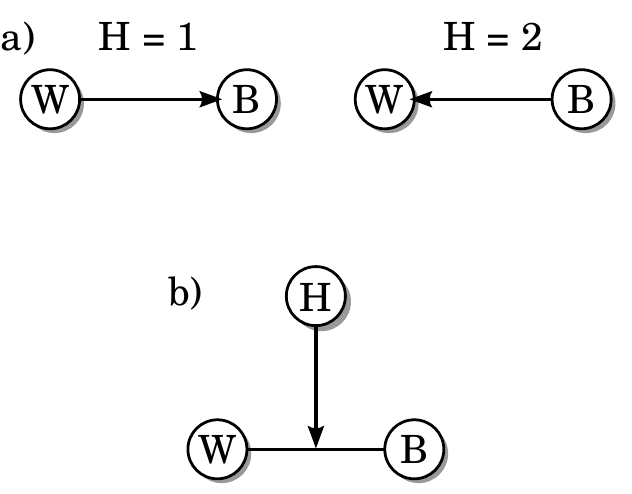}
\caption{Combination of causal hypotheses.}\label{fig:hypotheses} 
\end{figure}

In contrast, this situation has a natural representation as a 
probability tree. This is because
a probability tree allows specifying causal dependencies that
are dynamically instantiated as a result of previous events.
We first observe that the hypothesis~H 
causally precedes the weather~W and the height of the mercury
column~B, as~H determines the very causal order of~W and~B. 
The resulting probability tree is depicted in Fig.~\ref{fig:barometer}. 
The semantics in a probability tree are self-explanatory: nodes represent
the potential states of realisation of the random experiment,
and the arrows indicate the possible transitions between them,
which are taken with the probabilities indicated in the transition
labels. Importantly, the arrows encode causal dependencies,
in that they specify the order in which the intermediate states 
of the experiment are determined. Hence, a path starting at the root 
and ending in a leaf corresponds to a full realisation of the 
random experiment. The probability of a particular realisation
(indicated by a bold number below the leaf) is obtained by multiplying
the probabilities of all the transitions taken.

\begin{figure}[htbp]
\centering %
\includegraphics[width=\columnwidth]{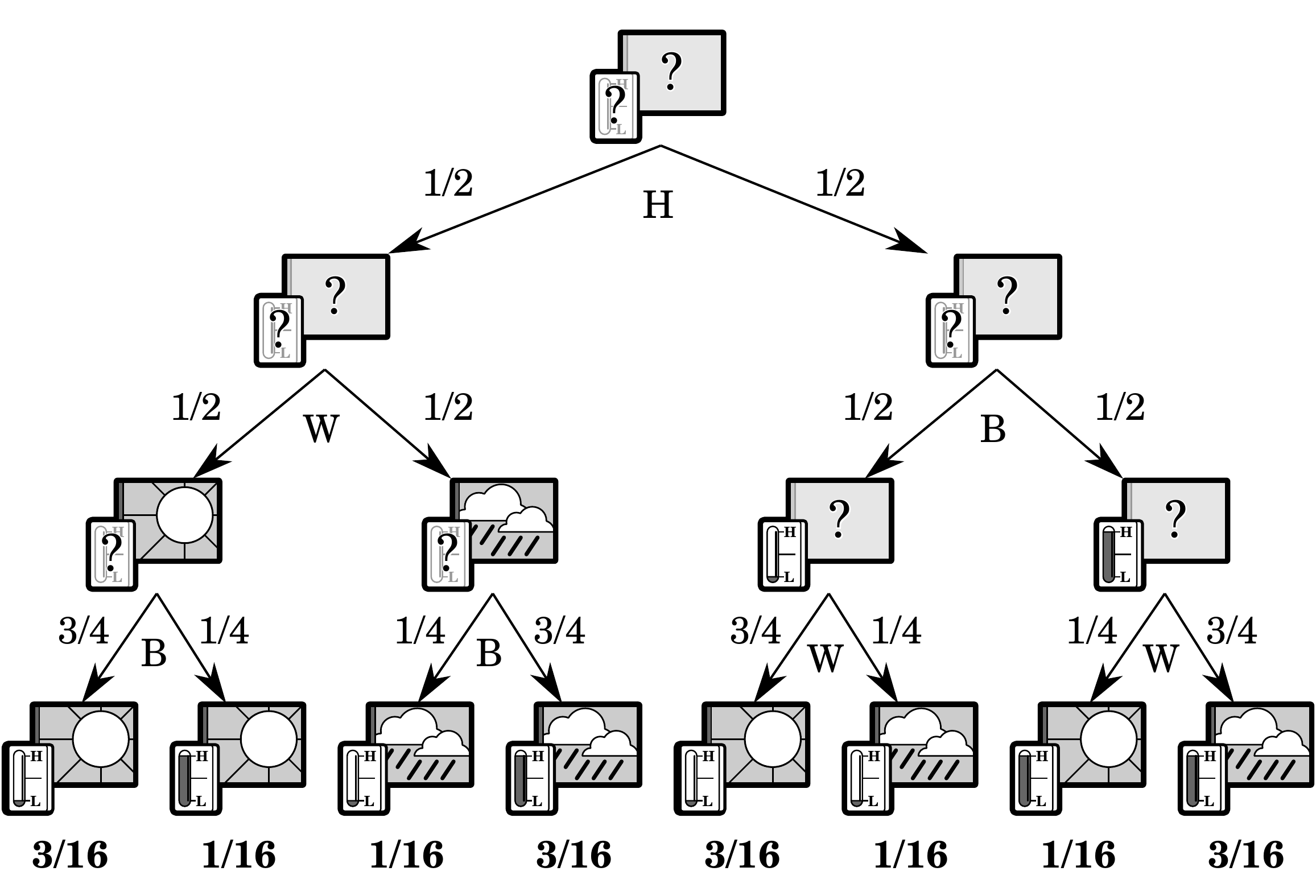}
\caption{Probability tree for the induction problem.}
\label{fig:barometer} %
\end{figure}

The probability tree in Fig.~\ref{fig:barometer} also shows that the subject 
holds uniform prior beliefs $\prob($H$)=1/2$ over the two hypotheses as 
encoded by the two transitions at the root node. This inclusion of subjective
probabilities is possible because we commit to a fully Bayesian interpretation 
of the probabilities in the tree. Finally, note that the left and right
subtrees from the root node encode both a causal order and a likelihood
function. In particular, the likelihood function $\prob($W, B$|$H$)$
is identical for both causal hypotheses, and consequently there is no way 
to distinguish between them through purely observational data---that is, 
in a game where every single move is decided by player~\emph{W}.

Formally, a probability tree can be characterised through a \emph{causal space}
consisting of: a \emph{sample space}; a collection of privileged nested events
called the \emph{set of realisations}; and a \emph{causal probability measure}
assigning transition probabilities (precise definitions are provided in Appendix~\ref{sec:abstract-subject}). 
Such a causal space contains enough information to generate the $\sigma$-algebra 
and the probability measure of a classical probability space. For instance, 
Fig.~\ref{fig:barometer-causal-space} illustrates a causal space suitable 
for modelling the probability tree of Fig.~\ref{fig:barometer}, where the
states of realisation of the experiment have been labelled $S_0, S_1, \ldots, S_{15}$.

\begin{figure}[htbp]
\centering %
\includegraphics[width=\columnwidth]{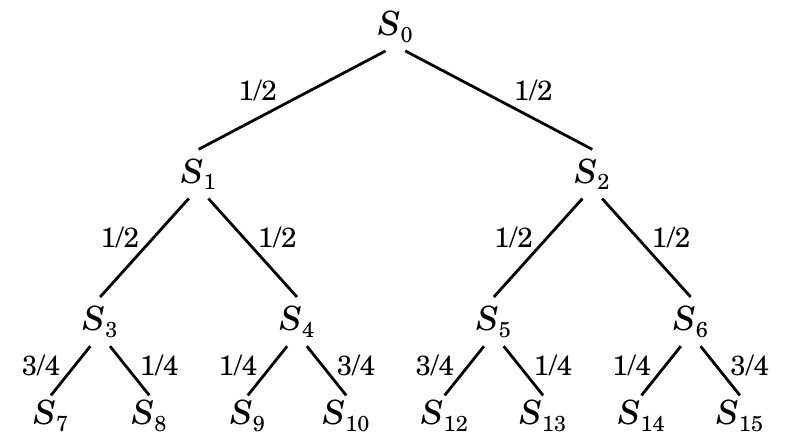}
\caption{Causal space of the induction problem.}
\label{fig:barometer-causal-space} %
\end{figure}

\subsection{Actions and Observations}

As mentioned previously, actions and observations correspond to moves taken
by players~\emph{I} and~\emph{W} respectively. During the course of the game, 
the subject keeps track of the probabilistic state of knowledge \emph{as seen 
from the perspective of player~I}. In particular, the moves of player~\emph{I} 
cannot depend on information that is private to player~\emph{W}. 

\paragraph{Observations} First, suppose that the subject makes her
first interaction by \emph{observing} that the barometer's mercury column
has fallen to a low value, \textit{i.e.}\ B = low. This can happen under three possible moves by player~\emph{W}, namely $S_3 \rightarrow S_7$, $S_4 \rightarrow S_9$, 
and $S_2 \rightarrow S_5$. The observation allows the subject to rule out 
all the realisations of the experiment that take any of the incompatible
transitions $S_3 \rightarrow S_8$, $S_4 \rightarrow S_{10}$ or $S_2 \rightarrow S_6$. 
Each one of the three compatible transitions postulates an alternative
underlying state of the game. For instance, the move $S_3 \rightarrow S_7$ 
presumes that player~\emph{W} had secretly played the moves $S_0 \rightarrow S_1$ 
and $S_1 \rightarrow S_3$ before eventually revealing the last move to player~\emph{I}. 
However, since the latent state of the game is unknown to player~\emph{I}, 
the subject updates her beliefs by conditioning on B = low. Using Bayes' rule, 
the posterior probability of the causal hypothesis H = 1 is equal to
\[
  \prob(\text{H} = 1|\text{B} = \text{low}) 
  = \frac{ \prob(\text{B} = \text{low}|\text{H} = 1) \prob(\text{H} = 1)}
    { \sum_h \prob(\text{B} = \text{low}|\text{H} = h) \prob(\text{H} = h) }
  = \frac{1}{2},
\]
where the likelihood $\prob(\text{B} = \text{low}|\text{H} = 1)$ is 
calculated by marginalising over the weather
\[
  \sum_{w} \prob(\text{W} = w|\text{H} = 1) 
  \prob(\text{B} = \text{low}|\text{H} = 1, \text{W} = w) = \frac{1}{2},
\]
and where $\prob(\text{B} = \text{low}|\text{H} = 2) = 1/2$ is as specified in 
the transition $S_2 \rightarrow S_5$. The posterior probability of the causal
hypothesis H = 2 can then be obtained through the sum rule of probabilities:
\[
  \prob(\text{H} = 2|\text{B} = \text{low}) 
  = 1 - \prob(\text{H} = 1|\text{B} = \text{low})
  = \frac{1}{2}.
\]
Since $\prob($H$|$B$ = $low$) = \prob($H$)$, the observation of the
barometer does not provide the subject with evidence favouring 
any of the two causal hypotheses.
 
\paragraph{Actions} Second, assume that the subject \emph{acts}
by setting the value B = low herself instead. In this case, it is player~\emph{I}
that makes one of the three aforementioned moves rather than player~\emph{W}. 
But, as is seen in Fig.~\ref{fig:barometer-causal-space}, these moves 
possess different probabilities. In order to draw the action
according to the prescribed odds, player~\emph{I} would have to know the underlying 
state of the game. Given that this is not the case, the subject must correct 
the probabilities of the randomised experiment before conditioning on 
B = low to account for player~\emph{I}'s 
ignorance. Technically, this correction is done as follows:
\begin{enumerate}
  \item[i.] \emph{Critical Bifurcations.} 
  The subject identifies all the states of realisation having at least
  one transition leading to \emph{any} compatible state and at least 
  one transition leading \emph{only} to incompatible states. These states
  are called \emph{critical bifurcations}. In the case of setting B = low, 
  the critical bifurcations are given by the states $S_2$, $S_3$, and~$S_4$. 
  \item[ii.] \emph{Transition Probabilities.} 
  For each critical bifurcation, the subject assigns zero probability 
  to every transition leading only to incompatible states, thereafter
  renormalising the remaining transitions. 
  Fig.~\ref{fig:barometer-causal-space-intervened} shows the 
  resulting causal space.
\end{enumerate}
This operation is analogous to Pearl's notion of \emph{causal 
intervention}, and the reader can find its mathematically rigorous definition
in Appendix~\ref{sec:abstract-subject}. From a conceptual point of view however, Pearl's 
motivation was to characterize the notion of \emph{manipulation}, while here 
it arises as an accounting device for hiding information from player \emph{I}. 

After this correction has been made, 
a new causal probability measure $\prob_\text{B=low}$ is obtained, and the subject 
can condition her beliefs on B = low as if it were a normal
observation. Note that $\prob_\text{B=low}$ preserves all the transition probabilities
of $\prob$ save for the intervened ones. 

Thus, the posterior probability of the causal hypothesis H~=~1 given that
player \emph{I} set B = low is equal to
\begin{align*}
  \prob_{\text{B=low}}&(\text{H} = 1|\text{B} = \text{low})
  \\&= \frac{ \prob_\text{B=low}(\text{B} = \text{low}|\text{H} = 1) \prob_\text{B=low}(\text{H} = 1)}
    { \sum_h \prob_\text{B=low}(\text{B} = \text{low}|\text{H} = h) \prob_\text{B=low}(\text{H} = h) }
  = \frac{1}{2},
\end{align*}
where, similarly to the observational case, the likelihood 
$\prob_\text{B=low}(\text{B} = \text{low}|\text{H} = 1)$ is calculated by 
marginalising over the weather, but now relative to the intervened distribution
\[
  \sum_{w} \prob_\text{B=low}(\text{W} = w|\text{H} = 1) 
  \prob_\text{B=low}(\text{B} = \text{low}|\text{H} = 1, \text{W} = w) = 1,
\]
and where the likelihood of the second hypothesis is
\[
  \prob_\text{B=low}(\text{B} = \text{low}|\text{H} = 2) = 1.
\]

\begin{figure}[htbp]
\centering %
\includegraphics[width=\columnwidth]{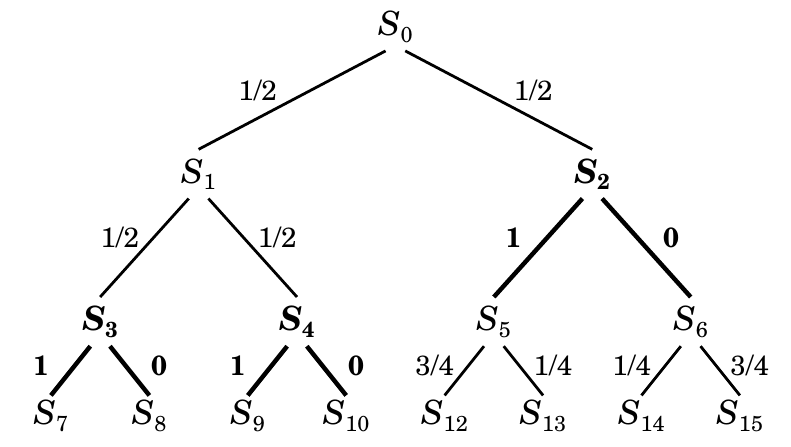}
\caption{Causal space after intervention.}
\label{fig:barometer-causal-space-intervened} %
\end{figure}

Fig.~\ref{fig:observation-vs-intervention} schematically illustrates
the difference between the previous action and observation in terms
of how the probability mass is adjusted during each belief update. Each 
rectangle illustrates the nested arrangement of the states of 
realisation (vertical axis) and their corresponding
probability masses (horizontal axis). Belief updates are carried out in two steps:
elimination of the incompatible probability mass and a subsequent
renormalisation.

\begin{figure}[htbp]
\centering %
\includegraphics[width=\columnwidth]{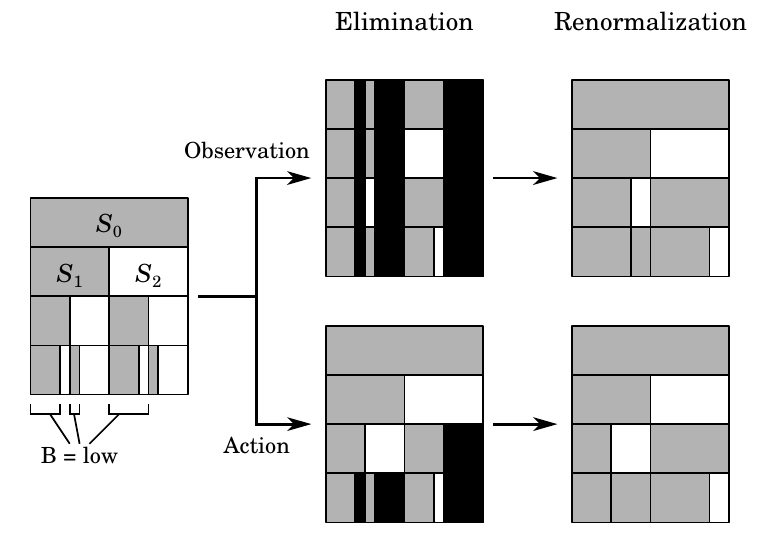}
\caption{Observations versus interventions.}
\label{fig:observation-vs-intervention} %
\end{figure}

\subsection{Posterior}

We now conclude our example by calculating the posterior distribution over the
two causal hypotheses. Note that both belief updates leave the posterior
distribution unchanged, \textit{i.e.}\
\[
  \prob(\text{H}|\text{B} = \text{low})
  = \prob_\text{B=low}(\text{H}|\text{B} = \text{low})
  = \prob(\text{H}),
\]
but they do so for different reasons: in the case of the observation, 
because the two causal hypotheses are observationally indistinguishable; 
and in the case of the action, because the choice of the hypothesis H
precedes the barometer reading B. 

Indeed, assume that the subject subsequently sees that the weather is 
rainy. Following an analogous calculation as before, it can be shown that
\[
  \prob(\text{H} = 1|\text{B} = \text{low}, \text{W} = \text{rainy})
  = \frac{1}{2},
\]
for the purely observational case, and
\[
  \prob_\text{B=low}(\text{H} = 1|\text{B} = \text{low}, \text{W} = \text{rainy})
  = \frac{2}{3},
\]
for the intervened case. Thus, the subject, by virtue of her action, can
render the two causal hypotheses observationally distinguishable, entitling her
to conclude that it is more likely that the weather controls the barometer just 
by looking at the weather. However, she does not rule out the alternative.

\section{Discussion}

\subsection{Causal Spaces}

The model of the subject advocated here is in no way intended to
be a replacement for existing causal frameworks. Rather, it aims at 
providing a common abstract ground containing a minimal
set of primitives to reason about general causal interventions. This was achieved 
by supplying the $\sigma$-algebra of standard probability theory with a set of 
realisations encoding the causal dependencies governing its events.

The model detailed in Appendix~\ref{sec:abstract-subject} is limited to 
countable sets of realisations. This was chosen so as to ensure 
that the $\sigma$-algebras generated by realisation sets are always 
well defined (\textit{e.g.}\ do not have excessive cardinalities). Furthermore,
the axioms guarantee that causal interventions are always well defined
for any given event.

\subsection{Dynamic Instantiation of Causes}

Causal spaces can express causal dependencies that are instantiated
dynamically, that is to say, causes that only come into force under suitable
higher-order causal conditions. Not only are these dynamic causes ubiquitous 
in Nature and society (\textit{e.g}\ genes that regulate the \textit{relation} 
between other genes; smoking causing cancer leads governments to restrict 
tobacco; \textit{etc.}), but also they are necessary to represent 
causal induction problems (see previous section).

\subsection{R\^{o}le of Interventions}
To stress the importance of the function performed by causal interventions
we can compare it to a mechanism found in the human immune system. The physical 
distinction between us and our surrounding seems obvious to us (\textit{e.g.}\ we
speak of our \textit{body}) but from a biomolecular point of view it is far from 
clear where these boundaries are. Much to the contrary: these boundaries 
must be actively maintained by \textit{antibodies}, which identify and
sometimes neutralise \textit{antigens} (\textit{i.e.}\ foreign bodies, 
such as bacteria and viruses) for their subsequent 
removal by other parts of the immune system.

Similarly, causal interventions tag those events that are attributed
to the subject's self to distinguish them from those generated by the world. 
Without them, the subject would be devoid of the psychological apparatus that
gives her a sense of unity and agency, and she would experience her life as 
a ``film with no recognizable protagonist'' that she could identity herself with.

\subsection{What is an Action?}

Classical decision theory assumes from the outset that there is a clear-cut
distinction between action and observation variables \citep{Neumann1944, 
Savage1954}. Similarly, control theory and artificial intelligence
take this distinction as a given. Indeed, artificial intelligence
textbooks describe an agent as any system possessing sensors
and effectors \citep{RussellNorvig2009}.

In contrast, the idea of the subject as a construct plus the line
of thought developed here suggest a different story. Whenever the
subject sees an event as emanating from herself, she tags it through
a causal intervention. However, every causal intervention changes her
beliefs in an irreversible manner. For instance, consider the subject's
choice probabilities of fixing the value of the barometer herself
from the induction example. These were equal to
\[
  \prob(\text{B} = \text{low}) = \frac{1}{2}
  \quad\text{and}\quad
  \prob(\text{B} =\text{ high}) = \frac{1}{2}
\]
before the intervention, and
\[
  \prob_\text{B=low}(\text{B} = \text{ low}) = 1
  \quad\text{and}\quad
  \prob_\text{B=low}(\text{B} =\text{ high}) = 0
\]
immediately thereafter but \emph{prior} to conditioning her beliefs 
on B $=$ low. In other words, she went from being completely undecided to being 
absolutely convinced that this had been her choice all 
along\footnote{Also, refer to the discussion in Appendix~\ref{sec:sequential-realisations}.}.
Plus, the new probabilibity measure $\prob_\text{B=low}$ does not possess enough information 
to recover the original probability measure $\prob$, yet it does contain the 
traces left by the intervention in the form of independences from the causal 
precedents. Accordingly, 
a random variable can be identified as an action if it can be thought of as 
the result of a causal intervention. But then again, how can she tell that 
the independences in $\prob_\text{B=low}$ were not already there from before the 
supposed intervention?

These observations raise the following basic open questions:
\begin{enumerate}
 \item What is the criterion employed by the subject to decide whether to
 treat an event as an action or an observation? To what extent is this arbitrary?
 \item Does this criterion have to be learned in the form of an hypothesis
 in the Bayesian sense?
 \item Does a subject gain anything from distinguishing between herself and the world?
\end{enumerate}
Indeed, if the subject wanted to learn what her actions and observations are, 
then the only way she can classify a random variable unambiguously, as is the 
case in classical decision theory, is when she has no hypothesis offering 
a competing explanation.

This also sheds more light into the connection between causal interventions
and Lacan's \textit{objet petit a}\footnote{The term \textit{objet petit a}
loosely translates into ``object little other''. The ``\textit{a}'' in 
\textit{objet petit a} stands for the French word \textit{autre} (other).}. 
An action turns out to be a random variable of somewhat contradictory 
nature: because on one hand, it is statistically independent and hence, 
subjectivised; but on the other hand, it is still generated by the
external world, namely, by the very decision processes of the subject
that are not under its direct control, \textit{e.g.}\ following a utility-maximising agenda.
Lacan's term \textit{objet petit a} can thus be regarded as a play of words
that encapsulates this dual nature \citep{Fink1996}.

\subsection{Concluding Remarks}

There are numerous reasons why I chose to compare Bayesian probability 
theory to Lacanian theory. It is true that, virtually since its inception, 
psychoanalytic theories have always faced fierce opposition that 
has questioned their status as a scientific discipline 
\citep[see \textit{e.g.}\ ][]{Popper1959, Feynman1964}. 
While their efficacy as a treatment of mental illnesses is undoubtedly controversial 
\citep{Tallis1996}, cultural studies have embraced psychoanalytic theories
as effective conceptual frameworks to structure the discourse about
subjectivity in a metaphysically frugal fashion. As a researcher in artificial 
intelligence, the greatest value I see in the psychoanalytic
theories is in that they epitomise the contingent cultural assumptions 
about subjectivity of modern Western thinking, summarising ideas
that otherwise would require a prohibitive literature research.

Finally, I would like to stress that, while here my motivation 
was to advance a mathematical definition of subjectivity (in Bayesian terms), 
the resulting axiomatic system is agnostic about its interpretation. The reader 
can verify that many of the existing philosophical views are indeed compatible
with the mathematical formalisation presented here \citep{IllariRusso2014}, 
and that some of the philosophical interpretations are not unrelated to the 
psychoanalytic interpretation (\textit{e.g.}\ the idea of 
\textit{agency probabilities} put forward by \citet{MenziesPrice1993} and the 
\textit{epistemic} interpretation of causality of \citet{Williamson2009}).

\section{Acknowledgements}

The writing of this essay has benefited from numerous conversations 
with D.A.~Braun, D.~Balduzzi, J.R.~Donoso, A.~Saulton, E.~Wong and
R.~Nativio. Furthermore, I wish to thank the anonymous reviewers
for their invaluable suggestions on how to improve this manuscript; 
M.~Hutter and J.P.~Cunningham
for their comments on a previous version of the abstract model of
causality (in my doctoral thesis); and A.~Jori, Z.~Domotor, 
and A.P.~Dawid for their insightful lectures and discussions on 
ancient philosophy, philosophy of science
and statistical causality at the Universities of T{\"u}bingen and Pennsylvania.
This study was funded by the Ministerio de Planificaci{\'o}n de Chile
(MIDEPLAN); the Emmy Noether Grant \mbox{BR 4164/1-1} (“Computational and 
Biological Principles of Sensorimotor Learning”); and by grants 
from the U.S. National Science Foundation, Office of Naval Research 
and Department of Transportation.

\appendices

\section{Probabilistic Causality}\label{sec:probabilistic-causality}

\subsection{Evidential versus Generative Probabilities}

One of the features of modern probability theory is that
a probability space $(\Omega, \sfield, \prob)$ can be used
in two ways, which we may label as \emph{evidential} and \emph{generative}.
The evidential use conceives a probability space as a representation
of an experimenter's knowledge about the conclusions he can draw when he is
provided with measurements performed on the outcomes; while the
generative usage sees the probability space as a faithful
characterisation of the experiment's stochastic mechanisms that bring about
observable quantities. Thus, a statistician or a philosopher can use
probabilities to asses the \emph{plausibility} of hypotheses, while an engineer
or a physicist typically uses them to characterise the \emph{propensity} of an
event, often assuming that these propensities are objective, physical properties
thereof.

In a Bayesian interpretation, evidential and generative
refer to subject's observations/measurements and actions/choices respectively.
Under the evidential usage of probabilities, the subject 
passively contemplates the measurements of phenomena generated by the
world. A measurement reveals to the subject which possible worlds she 
can discard from her knowledge state. In contrast, under the 
generative usage of probabilities, the subject \emph{is} the 
random process itself. Thus, outcomes are chosen randomly 
by the subject and then communicated to the world. 
While there are many cases where this distinction does not
play a r\^{o}le, if we aim at characterising a subject 
that both passively observes and actively chooses, this distinction
becomes crucial.

Our running example consists of a three-stage experiment involving 
two identical urns: urn A containing one white and three black 
balls, and urn B having three white and one black ball. 
In stage one, the two urns are either swapped or not with uniform probabilities.
In stage two it is randomly decided whether to
exclude the left or the right urn from the experiment. If the urns have not
been swapped in the first stage, then the odds are $3/4$ and $1/4$ for keeping
the left and the right urn respectively. If the urns have been swapped, then
the odds are reversed. In the third and last stage, a ball is drawn from the urn
with equal probabilities and its colour is revealed.  We associate each stage
with a binary random variable: namely Swap $\in$ $\{$yes, no$\}$, Pick $\in$
$\{$left, right$\}$ and Colour $\in$ $\{$white, black$\}$ respectively.
Figure~\ref{fig:experiment} illustrates the set-up. In calculations, I will
sometimes abbreviate variable names and their values with their first letters.
We will now consider several interaction protocols between two players named \textit{I}
and \textit{W}, representing the outward and inward flow of a subject
respectively as detailed in Section~\ref{sec:subj-bayesian}.

\begin{figure}[htbp]
\centering %
\includegraphics[width=\columnwidth]{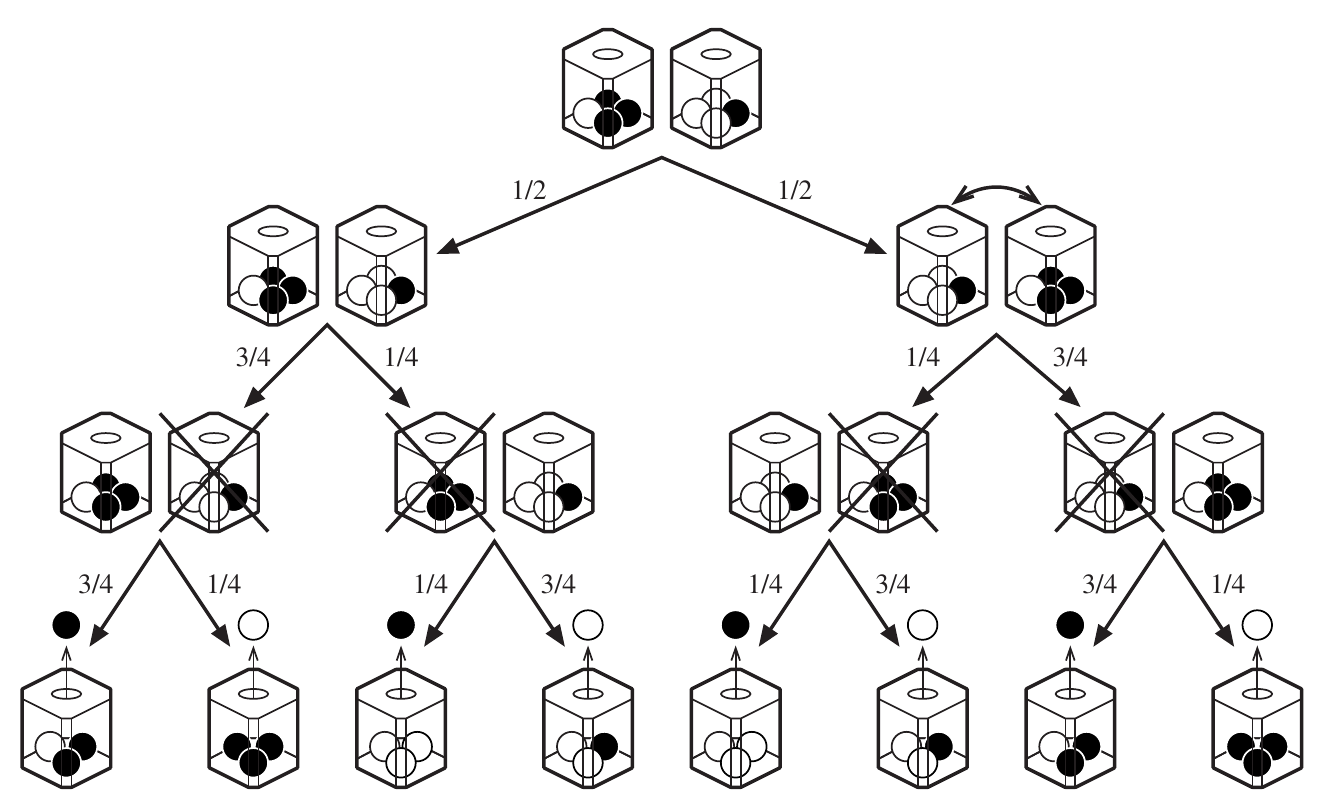}
\caption{A three-stage randomised experiment.}
\label{fig:experiment} %
\end{figure}

\subsubsection{Generative}

In the generative case, \textit{I} carries out the three steps of the experiment,
possibly consulting auxiliary randomising devices like tosses of fair coins. 
In each step, \textit{I} makes a random \emph{choice}; that is, it selects
a value for the corresponding random variable following a prescribed 
probability law that depends on the previous choices. For instance, 
the odds of ``drawing a black ball'' given that ``the urns have been 
swapped in the first stage and the right urn has been picked in the 
second stage'' is $3/4$.

The probabilities governing \textit{I}'s behaviour are formalised with a
probability space $S \define (\Omega,\sfield,\prob)$, where $\Omega_1$
contains the eight possible outcomes, $\sigma$-algebra is the powerset
$\sfield = \power(\Omega_1)$, and $\prob$ is the probability measure that
is consistent with the conditional probabilities in Figure~\ref{fig:experiment}.
Table~\ref{tab:experiment} lists the eight outcomes and their probabilities.

\begin{table}[htbp]
\caption{Outcome probabilities in probability space $S$} %
\label{tab:experiment} %
\bigskip %
\centering %
\begin{tabular}{cccc}
  \toprule
  Swap & Pick & Colour & Probability \\
  \midrule
  no  & left  & black & 9/32 \\
  no  & left  & white & 3/32 \\
  no  & right & black & 1/32 \\
  no  & right & white & 3/32 \\
  yes & left  & black & 1/32 \\
  yes & left  & white & 3/32 \\
  yes & right & black & 9/32 \\
  yes & right & white & 3/32 \\
  \bottomrule
\end{tabular}
\end{table}

The information contained in the probability space does not enforce a
particular sequential plan to generate the outcome. The story of the experiment
tells us that Swap, Pick, and Colour are chosen in this order. However, \textit{I} 
can construct other sequential plans to generate the outcome. 
For example, \textit{I} could first choose the value of Colour, then
Swap, and finally Pick (possibly having to change the underlying story about
urns and balls), following probabilities that are in perfect accordance with the
generative law specified by the probability space.

\subsubsection{Evidential}

In this case, player \textit{I}, knowing about the probability law governing 
the experiment, passively observes its realisation as chosen by \textit{W}. 
In each step, it makes a \emph{measurement};
that is, \textit{I} obtains the value of a random variable and uses it to update its
beliefs about the state of the outcome. For instance, the plausibility of ``the
ball is black'' given that ``the urns have been swapped in the first stage and
the right urn has been picked in the second stage'' is~$3/4$.

Here again, the probabilities governing \textit{I}'s beliefs are formalised by
the same probability space $S$. Analogously to the generative case,
it does not matter in which order the information about the outcome is revealed
to \textit{I}: for instance, $\prob(\text{Colour}|\text{Swap, Pick})$ is the
same no matter whether it observes the value of Swap or Pick first.

\subsection{Mixing Generative and Evidential}

Let us change the experimental paradigm. Instead of letting~\textit{W} choosing
and~\textit{I} passively observing, we now let both determine the
outcome, taking turns in steering the course of the experiment depicted 
in Figure~\ref{fig:experiment}. In the first stage, \textit{W} chooses between
swapping the urns or not; in stage two, \textit{I} decides randomly whether to keep the
left or the right urn; and in the last stage, \textit{W} draws a ball from the remaining
urn. The protocol is summarised in Table~\ref{tab:protocol}. We will investigate
two experimental conditions.

\begin{table}[htbp]
\caption{Protocol for the experiment} %
\label{tab:protocol} %
\bigskip %
\centering %
\begin{tabular}{ccc}
  \toprule
  Stage & Variable & Chosen by \\
  \midrule
  1  & Swap  & \textit{W} \\
  2  & Pick  & \textit{I} \\
  3  & Colour & \textit{W} \\
  \bottomrule
\end{tabular}
\end{table}

\subsubsection{Perfect Information.}

Under the first condition, both players are fully aware of all the previous
choices. At any stage, the player-in-turn makes a decision
following the conditional probability table that is consistent with
past choices. It is easy to see that, again, the probability space $S$
serves as a characterisation of the subject: although this time the conditional
probabilities stand for \textit{I}'s beliefs (first and last stage) 
and \textit{I}'s behaviour (second stage). Essentially, the fact that 
we now have interactions between \textit{I} and \textit{W} can still 
be dealt with under the familiar analytical framework of
probability theory. Note that, as in the previous two cases, we can suggest
changes to the sequential order; furthermore, we can swap the players'
r\^{o}les without changing our calculations.

\subsubsection{Imperfect Information}

The second experimental regime is identical to the previous one with one exception:
\textit{W} carries out the first stage of the experiment secretly, \emph{without
telling} \textit{I} whether the urns were swapped or not. Hence, for \textit{I}, the
statements ``Swap = yes'' and ``Swap = no'' are equiprobable (\textit{e.g.}\ the urns
are opaque, see Fig.~\ref{fig:transparent-opaque}). How should \textit{I}
choose in this case? Let us explore two attempts.

\begin{figure}[htbp]
\centering %
\includegraphics[width=5cm]{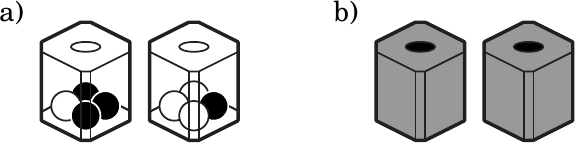}
\caption{Transparent versus opaque.}
\label{fig:transparent-opaque} %
\end{figure}

The first attempt consists in postulating that the two experimental
regimes (perfect and imperfect information) are decoupled and hence 
require case-based probability specifications.
Concretely, $\prob(\text{Pick}|\text{Swap $=$ yes})$,
$\prob(\text{Pick}|\text{Swap $=$ no})$ and $\prob(\text{Pick})$ are
unrelated probability distributions and are therefore freely specifiable.
While this is a possible solution, it has the drawback that the resulting
belief model violates the probability axioms, since in general the equation
\[
  \prob(\text{P}) \neq
      \sum_{\text{S $=$ y, n}}
          \prob(\text{P}|\text{S}) \, \prob(\text{S})
\]
does not hold for tuples of conditional probability distributions.

The second attempt enlarges the model as follows. We add an auxiliary binary
variable, say KnowsSwap, that indicates whether \textit{I} is in possession of
the value of variable Swap. This allows specifying a total of four conditional
probability distributions of the form
\[
  \prob(\text{Pick}|\text{Swap}, \text{KnowsSwap})
\]
indexed by the joint settings of Swap and KnowsSwap, where the latter can be
treated as another random variable or as a conditional variable. However, this
arrangement does not fundamentally bypass the original problem: we
can extend \textit{I}'s ignorance of the value of Swap to the value of
KnowsSwap as well. That is, although we have extended Pick's functional
dependency from Swap to both Swap and KnowsSwap, \emph{there is no reason
why KnowsSwap should not be an undisclosed variable too}. Consequently, this
would require introducing yet another auxiliary variable, say KnowsKnowsSwap, to
indicate whether the value of KnowsSwap is known, and so forth, piling up an
infinite tower of indicator variables. Eventually, one is left with the
feeling that this second solution is conceptually unsatisfactory as well.

Thus, let us continue with a solution that accepts \textit{I}'s
ignorance of the value of the random variable Swap. 
The story of the experiment tells us that the probabilities
$\prob(\text{Pick}|\text{Swap})$ have the semantics of conditional instructions
for~\textit{I}; but since the condition is unknown, the
choice probabilities consistent with this situation are obtained
by marginalising over the unknown information. More specifically, 
the probability of picking the right urn is
\begin{align*}
    \prob(\text{P $=$ r})
    &= \sum_{\text{S $=$ y,n}} \prob(\text{P $=$ r}|\text{S}) \,
\prob(\text{S}),
\end{align*}
which is thereby rendered independent of the unknown information. For the
particular numbers in our example, the choice probabilities evaluate to a uniform
distribution
\[
  \prob(\text{P $=$ r})
    = \frac{3}{4} \cdot \frac{1}{2} + \frac{1}{4} \cdot \frac{1}{2}
    = \frac{1}{2},
   \qquad \prob(\text{P $=$ l}) = \frac{1}{2}.
\]
Interestingly, the resulting experiment does not follow the same generative 
law as in the previous experimental
condition any more, for the odds of swapping the urns in the first stage and picking the
right urn in the second were $\frac{1}{2} \cdot \frac{3}{4} = \frac{3}{8}$ and
not $\frac{1}{2} \cdot \frac{1}{2} = \frac{1}{4}$ like in the current set-up.
Thus, albeit
\textit{I}'s beliefs are captured by the probability space $S = (\Omega,
\sfield, \prob)$, the outcomes of this new experiment follow a different
generative law, described by a probability triple $S' \define (\Omega,
\sfield, \prob')$, where $\prob' \neq \prob$ is determined by the
probabilities listed in Table~\ref{tab:experiment-2}. The choice made by
\textit{I} actually changed the probability law of the experiment! 

\begin{table}[htbp]
\caption{Outcome probabilities in probability space $S'$.} %
\label{tab:experiment-2} %
\bigskip %
\centering %
\begin{tabular}{cccc}
  \toprule
  Swap & Pick & Colour & Probability \\
  \midrule
  no  & left  & black & 3/16 \\
  no  & left  & white & 1/16 \\
  no  & right & black & 1/16 \\
  no  & right & white & 3/16 \\
  yes & left  & black & 1/16 \\
  yes & left  & white & 3/16 \\
  yes & right & black & 3/16 \\
  yes & right & white & 1/16 \\
  \bottomrule
\end{tabular}
\end{table}

A moment of thought reveals that this change happened simply because 
\textit{I}'s state of knowledge did not conform to the functional 
requirements of the second random variable. At first seemingly harmless,
this change of the probability law has far-reaching consequences for
\textit{I}'s state of knowledge: the familiar operation of probabilistic 
conditioning does not yield the correct belief update any more. 
To give a concrete example, recall that the plausibility of the 
urns having been swapped in the first stage \emph{before} \textit{I}
picks the left urn in the second stage is
\[
    \prob(\text{S $=$ y}) = \frac{1}{2}.
\]
However, \emph{after} the choice, the probability is
\begin{align*}
    \prob(\text{S $=$ y}|\text{P $=$ l})
    &= \frac{ \prob(\text{P $=$ l}|\text{S $=$ y}) \prob(\text{S $=$ y}) }
    { \sum_{\text{S $=$ y, n}} \prob(\text{P $=$ l}|\text{S}) \prob(\text{S}) }
    \\&= \frac{ \frac{1}{4} \cdot \frac{1}{2} }
           { \frac{1}{4} \cdot \frac{1}{2} + \frac{3}{4} \cdot \frac{1}{2} }
    = \frac{1}{4}.
\end{align*}
Hence, if \textit{I} wanted to use probabilistic conditioning to infer 
the plausibility of the hypothesis, then it would conclude that its
choice actually \emph{created evidence} regarding the first stage of 
the experiment---a conclusion that violates common sense.

\subsection{Causal Realisations}\label{sec:sequential-realisations}

If we accept that probabilistic conditioning is not the correct belief update
in the context of generative probabilities then we need to re-examine the
nature of probabilistic choices.

The familiar way of conceptualising the realisation of a random experiment
$(\Omega, \sfield, \prob)$ is via the choice of a sample $\omega \in \Omega$
following the generative law specified by the probability measure
$\prob$. Sequential observations are modelled as sequential refinements (\textit{i.e.}\ a
filtration)
\[
    \sfield_\text{I}
    \xrightarrow{\text{S}} \sfield_\text{II}
    \xrightarrow{\text{P}} \sfield_\text{III}
    \xrightarrow{\text{C}} \sfield
\]
of an initial, ignorant algebra $\sfield_\text{I} = \{\Omega, \varnothing\}$ up
to the most fine-grained algebra $\sfield = \power(\Omega)$. The labels on the
arrows indicate the particular random variable that has become observable (\textit{i.e.}\
measurable) in the refinement. A second, non-standard way in the context of
probability theory, is to think of a realisation as a \emph{random
transformation} of an initial
probability triple $(\Omega, \sfield, \prob)$ into a final, degenerate
probability triple $(\Omega, \sfield, \prob_\omega)$, where $\prob_\omega$ is
the probability measure concentrating all its probability mass on the
singleton $\{\omega\} \in \sfield$. This alternative way of accounting for
random realisations will prove particularly fruitful to formalise probabilistic
choices.

In many situations it is natural to subdivide the realisation of a complex
experiment into a sequence of realisations of simple sub-experiments. 
For instance, the realisation of the experiment in the running 
example can be broken down into a succession of three random choices, 
\textit{i.e.}\ a sequence
\[
    \prob
    \xrightarrow{f_\text{S}} \prob_\text{I}
    \xrightarrow{f_\text{P}} \prob_\text{II}
    \xrightarrow{f_\text{C}} \prob_\text{III},
\]
of random transformations of the initial probability measure $\prob$. Here,
the three mappings $f_\text{S}$, $f_\text{P}$, and $f_\text{C}$ implement
particular assignments for the values of Swap, Pick, and Colour respectively, and
$\prob_\text{I}$, $\prob_\text{II}$, and $\prob_\text{III}$ are their
corresponding resulting probability measures. Together, $f_\text{S}$,
$f_\text{P}$, and $f_\text{C}$ form a \emph{sequential plan} to specify a
particular realisation $\{\omega\} \in \sfield$ of the experiment.
However, the mathematical formalisation of such decompositions requires
further analysis.

Underlying any purely evidential usage of probabilities, there is the implicit,
although somewhat concealed, assumption of a predetermined outcome: the
choice of the outcome of the experiment \emph{precedes} the measurements
performed on it\footnote{Here we recall the probabilistic mythology, in which it is
\emph{Tyche, the Goddess of Chance}, who has the privilege of choosing the
outcome.}. In other words, obtaining information about the outcome updates the
belief state of the subject, but not the outcome itself. In contrast, the
generative use assumes an undetermined, fluid state of the outcome. More
specifically, a choice updates both the beliefs of the subject \emph{and the
very state of the realisation}. Hence, there are two types of states
that have to be distinguished: the state of the beliefs and the state of the
realisation.

Distinguishing between the states of the realisation imposes restrictions on how
beliefs have to be updated after making choices. These restrictions are
probably best highlighted if one imagines---just for illustrative purposes---that the
experiment is a physical system made up from a cascade of (stochastic)
mechanisms, where each mechanism is a sub-experiment implementing a choice. 
Based on the physical metaphor, one concludes that choices can only affect
the odds of the states of realisation that are \textit{downstream}. So, for instance, picking the 
left urn knowing that the urns were swapped in the first stage increases the odds for drawing a white ball in the last stage, but it cannot change the fact that the urns have 
been swapped. Hence, the belief update following a choice affects the 
beliefs about the future, but not about the past\footnote{To be more precise, 
by the terms \textit{past} and \textit{future} I mean the causal precedents 
and causal successors respectively.}.

\begin{figure}[htbp]
\centering %
\includegraphics[width=\columnwidth]{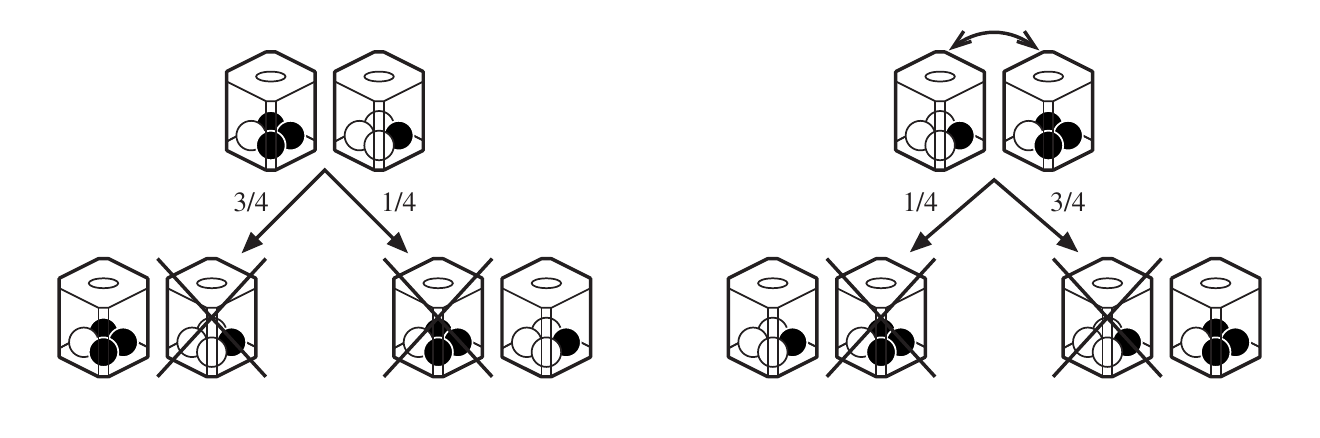}
\caption{The second stage of the randomised experiment.}
\label{fig:stage-two} %
\end{figure}

Another aspect of choices is concerned with their scope, which spans many
potential realisations. Consider the second stage of the experiment. As it can
be seen from its illustration in Figure~\ref{fig:stage-two}, this stage contains
two parts, namely one for each possible outcome of the first stage. In a sense,
its two constituents could be conceived as being actually two stand-alone
experiments deserving to be treated separately, since they represent
alternative, mutually exclusive historical evolutions of the sequential experiment
which were rendered causally independent by the choice in the first stage.
Thus, picking an urn given that the urns were swapped in the first stage could
in principle have nothing to do with picking an urn given that the urns were
not swapped. However, this separation requires knowing the choices in the execution of the
sequential experiment; a situation that failed to happen in our previous example.
To be more precise: the semantics of this grouping is
precisely that we have declared not being able to discern between them.
In game theory, this is called an \textit{information set}---see Appendix~\ref{sec:game-theory} later in the text.

\begin{figure}[htbp]
\centering %
\includegraphics[width=\columnwidth]{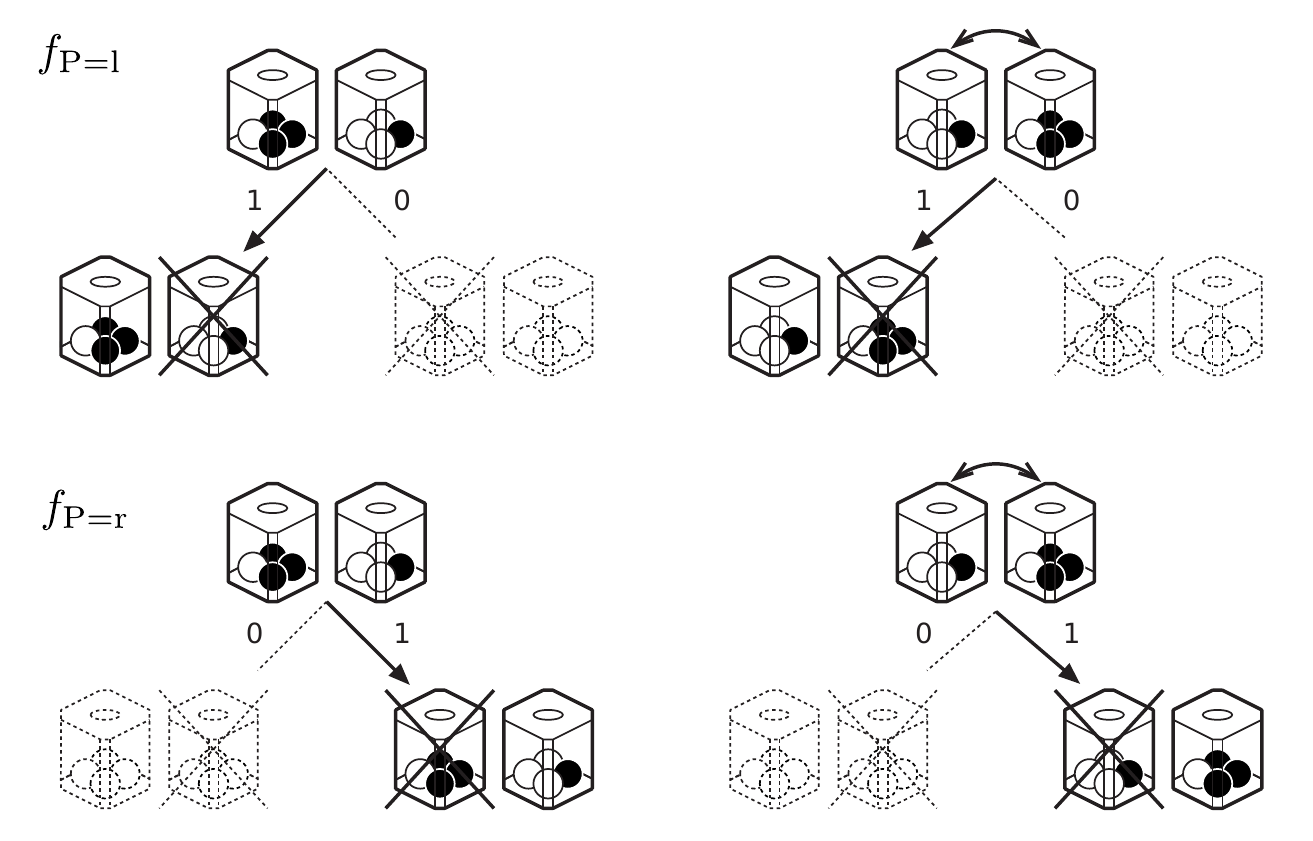}
\caption{The two possible choices in the second stage.}
\label{fig:choices-stage-two} %
\end{figure}

Even though it is clear that the belief update for a choice has to respect the
causal boundaries separating the different histories, a choice is an epistemic operation
that affects \emph{all} histories in parallel because they are the \emph{same} from
the subject's point of view. Therefore, we formalise the sub-experiment 
in Figure~\ref{fig:stage-two} as a
\emph{collection} of experiments admitting two choices, namely $f_\text{P$=$l}$
and $f_\text{P$=$r}$, representing the choice of the left and right urn
respectively. Suppose we are asked to choose the left urn in this collection of
experiments. This makes sense because ``choosing the left urn'' is an operation
that is well-defined across all the members in the collection. Then, this choice
amounts to a transformation that puts all the probability mass on both left
urns. Analogously, ``choosing the right urn'' puts all the probability mass on
the right urns. The two choices, $f_\text{P$=$l}$ and $f_\text{P$=$r}$, are
illustrated in Figure~\ref{fig:choices-stage-two}.

\begin{figure}[htbp]
\centering %
\includegraphics[width=\columnwidth]{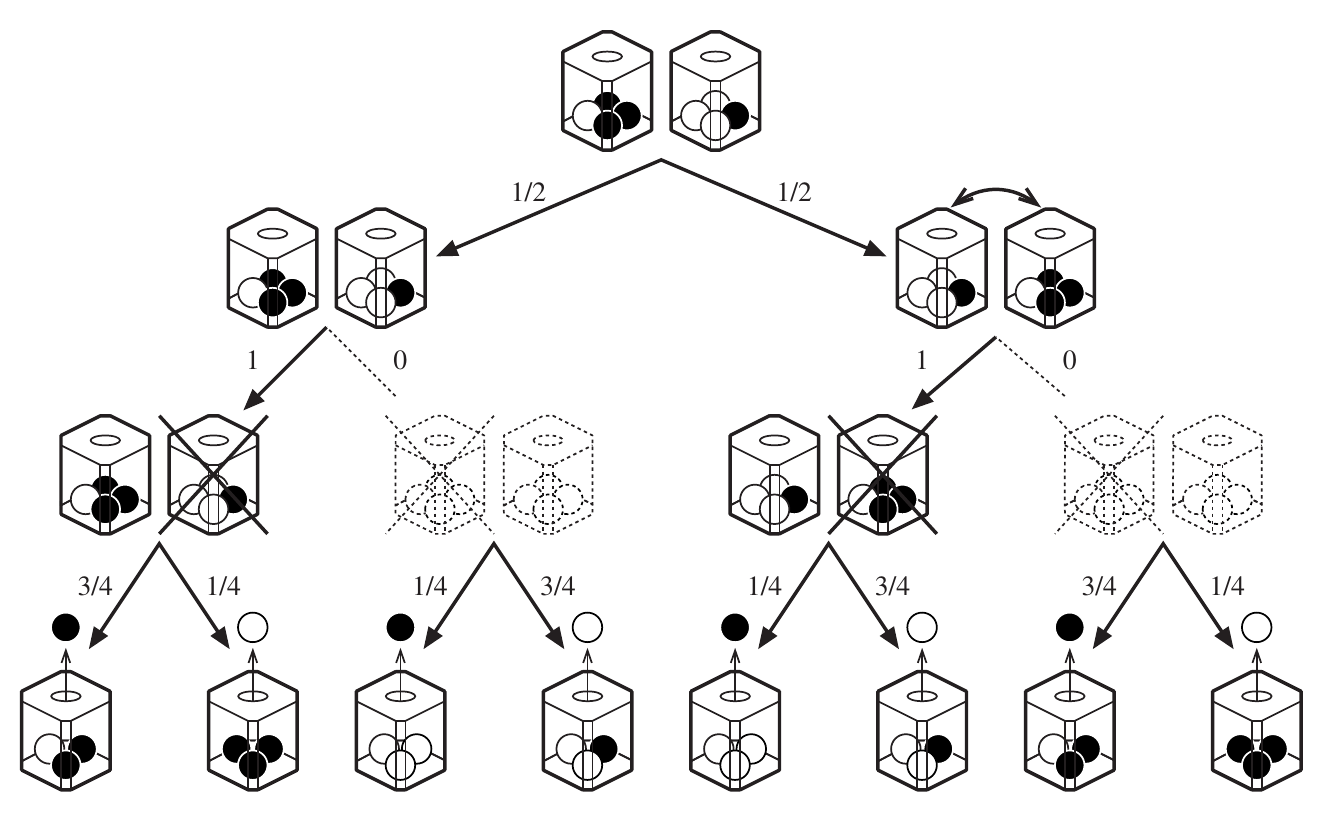}
\caption{The three-stage randomised experiment after choosing the left urn in
the second stage.}
\label{fig:manipulated-experiment} %
\end{figure}

Recall the situation where player \textit{I} chose the left urn without knowing
whether the urns were swapped or not in the first stage. From our previous
discussion, this amounts to applying $f_\text{P$=$l}$ to the sub-experiment in
the second stage. This leads to the modified three-stage experiment illustrated
in Figure~\ref{fig:manipulated-experiment} having a probability measure
$\prob_\text{P$=$l}$, where the subscript informs us of the manipulation
performed on the original measure $\prob$. Similarly, if $f_\text{P$=$r}$ is
applied, we obtain the probability measure $\prob_\text{P$=$r}$ for the
experiment. Table~\ref{tab:manipulated-experiment} lists both probability
measures plus the expected probability measure $\expect[\prob_\text{P}]$ which
averages over the two choices. Notice that in Table~\ref{tab:experiment-2},
it is seen that $\expect[\prob_\text{P}]$ is equal to $\prob'$, \textit{i.e.}\ the
probability law resulting from the experiment under the condition of imperfect
information.

\begin{table}[htbp]
\caption{Probabilities of the experiment after the choice of Pick.} %
\label{tab:manipulated-experiment} %
\bigskip %
\centering %
\begin{tabular}{cccccc}
  \toprule
  Swap & Pick & Colour & $\prob_\text{P=L}$ & $\prob_\text{P=R}$ & $\expect[ \prob_\text{P} ]$ \\
  \midrule
  no  & left  & black & 3/8 & 0   & 3/16 \\
  no  & left  & white & 1/8 & 0   & 1/16 \\
  no  & right & black & 0   & 1/8 & 1/16 \\
  no  & right & white & 0   & 3/8 & 3/16 \\
  yes & left  & black & 1/8 & 0   & 1/16 \\
  yes & left  & white & 3/8 & 0   & 3/16 \\
  yes & right & black & 0   & 3/8 & 3/16 \\
  yes & right & white & 0   & 1/8 & 1/16 \\
  \bottomrule
\end{tabular}
\end{table}

Finally, we calculate the plausibility of the urns having been swapped after
the left urn is chosen. This is given by
\begin{align*}
    \prob_\text{P$=$l}(\text{S $=$ y}|\text{P $=$ l})
    &= \frac{ \prob_\text{P$=$l}(\text{P $=$ l}|\text{S $=$ y})
        \prob_\text{P$=$l}(\text{S $=$ y}) }
    { \sum_{\text{s$=$y,n}} \prob_\text{P$=$l}(\text{P $=$ l}|\text{S $=$ s})
        \prob_\text{P$=$l}(\text{S $=$ s}) }
    \\&= \frac{ 1 \cdot \prob(\text{S $=$ y}) }
    { \sum_{\text{s$=$y,n}} 1 \cdot \prob(\text{S $=$ s}) }
    \\&= \frac{ 1 \cdot \frac{1}{2} }
           { 1 \cdot \frac{1}{2} + 1 \cdot \frac{1}{2} }
    = \frac{1}{2}.
\end{align*}
Hence, according to the belief update for choices that we have proposed in this
section, choosing the left urn in the second stage does not provide evidence
about the first stage. However, if a black ball is drawn right afterwards,
the posterior plausibility will be
\begin{align*}
  \prob_\text{P$=$l}(&\text{S $=$ y}|\text{P $=$ l}, \text{C $=$ b}) \\
    &= \frac{ \prob_\text{P$=$l}(\text{C $=$ b}|\text{S $=$ y}, \text{P $=$ l})
      \prob_\text{P$=$l}(\text{S $=$ y}|\text{P $=$ l}) }
    { \sum_{\text{s$=$y,n}}
      \prob_\text{P$=$l}(\text{C $=$ b}|\text{S $=$ s}, \text{P $=$ l})
      \prob_\text{P$=$l}(\text{S $=$ s}|\text{P $=$ l}) }
    \\&= \frac{ \frac{1}{4} \cdot \frac{1}{2} }
    { \frac{1}{4} \cdot \frac{1}{2} + \frac{3}{4} \cdot \frac{1}{2} }
    = \frac{1}{4},
\end{align*}
\textit{i.e.}\ the subject obtains evidence favouring the hypothesis that the urns
were not swapped. This leads to an interesting interpretation. In a sense, the
intervention functions as a psychological mechanism
informing the subject that her choice cannot be used as additional evidence to
support hypotheses about the past; the fundamental r\^{o}le of the intervention is
to declare the choice as an unequivocal, deterministic consequence of the
subject's state of knowledge at the moment of the decision. Or, loosely
speaking, the intervention ``tricks the subject into believing that her choice
was deliberate, not originating from an external source''---recall the
discussion about the \textit{objet petit a}. As a consequence, a
subject can never learn from her own actions; rather, she only learns from their
effects.

\subsection{Connection to Extensive-Form Games}\label{sec:game-theory}

Before concluding this section, we briefly review the relation between the
causal interventions derived above and their connection to extensive-form
games. For the sake of brevity, here I will adopt an informal exposition 
to elucidate the connection of concern, referring the reader to the original
text by \citet[ch.~7]{Neumann1944} or the modern text by 
\citet[chs.~2, 6, and~11]{Osborne1999} for a formal description.

In game theory, an \textit{extensive-form game} is a specification of a game
between two or more players that can capture a variety of aspects of the game
mechanics, such as the order and possible moves of the players, the information available 
to them under each situation, their pay-offs, and chance moves. The representation 
of such a game consists in a rooted game tree, where each node is assigned to
a player, and where the edges leaving a node represent the possible moves that
the corresponding player can take in that node.

\begin{figure}[htbp]
\centering %
\includegraphics[width=0.8\columnwidth]{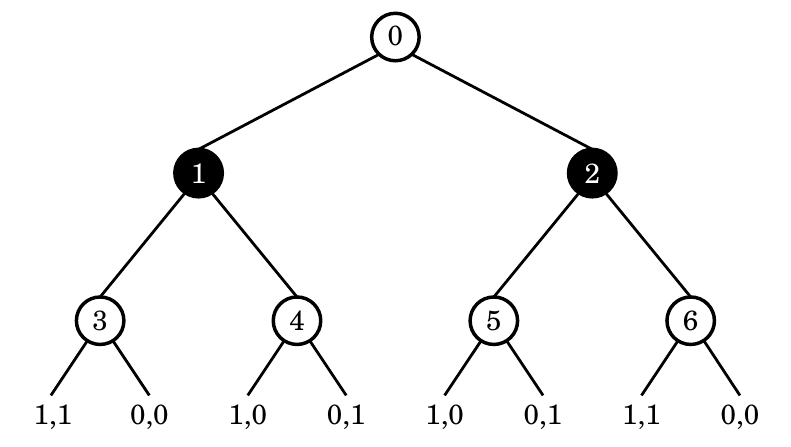}
\caption{An extensive-form game with perfect information.}
\label{fig:game-perfect} %
\end{figure}

Fig.~\ref{fig:game-perfect} illustrates a two-player game with three steps.
The internal nodes are coloured according to the player that takes that move, 
and each terminal node is labelled with the pay-off for Black and White
respectively. Here, notice that the best strategy for Black consists in choosing
\textit{left} when in node~1, and \textit{right} when in node~2, as otherwise she
would walk away empty-handed, assuming that White makes moves to maximise his 
own pay-offs.

\begin{figure}[htbp]
\centering %
\includegraphics[width=0.8\columnwidth]{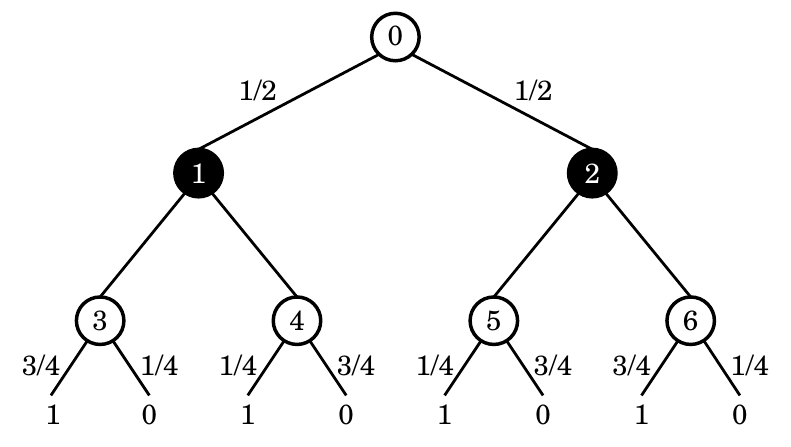}
\caption{Replacing one player with chance moves.}
\label{fig:game-perfect-chance} %
\end{figure}

We can bring chance elements into the game by replacing one of the players
with stochastic moves. In this case, White is substituted with a stochastic
strategy that is reminiscent of the original optimal strategy (\textit{i.e.}\ White
now chooses a suboptimal move with probability $1/4$), like shown in 
Fig.~\ref{fig:game-perfect-chance}. Notice that once the strategy is
settled, we can drop the pay-offs for White from the description of the game.
The new game with chance moves for White has the same optimal strategy for 
Black as the previous game.

The two previous games are known as games with \textit{perfect information},
because players know at all times what moves were taken previously in the game.
To model games where players do not see one or more of the previous moves,
von~Neumann introduced the concept of an \textit{information set}. An information
set consists of a set of nodes belonging to the same player, with the semantics
that if any of them is reached during the game, then the corresponding player
cannot distinguish between its members to make the next move. Hence, the general description
of an extensive-form game requires specifying an partition of each player's decision
nodes into information sets, and perfect information games are special in that each
player's partition is just a collection of singletons.

\begin{figure}[htbp]
\centering %
\includegraphics[width=0.8\columnwidth]{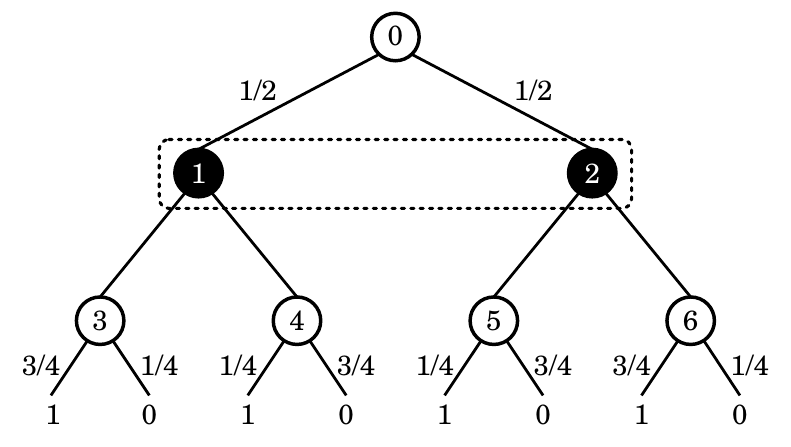}
\caption{Information sets.}
\label{fig:game-imperfect-chance} %
\end{figure}

Information sets are typically drawn as a loop around its member nodes,
and singleton sets are omitted for brevity. Fig.~\ref{fig:game-imperfect-chance}
shows the new game obtained from lumping together nodes~1 \&~2. As a result, the
previous optimal strategy for Black in not valid any more, since it prescribes
different moves for the two nodes in the set. In contrast, the new optimal strategy for
Black must take into account that either node is reached with the same 
probability. Indeed, under this constraint on Black's knowledge, it is easily 
seen that both possible actions are rendered equally good.

It hard to overstate Von~Neumann's achievement. Clearly, information sets
play the r\^{o}le of restricting the game dynamics under causal constraints, \text{i.e.}\ 
\emph{players' moves are interventions}. More precisely, once players pick their 
strategies, the resulting sequential distribution over moves can be thought 
of as the result of applying the type of causal interventions discussed 
in the preceding parts of this section (see Fig.~\ref{fig:game-intervention} for
a comparison with the associated causal DAG). 

\begin{figure}[htbp]
\centering %
\includegraphics[width=\columnwidth]{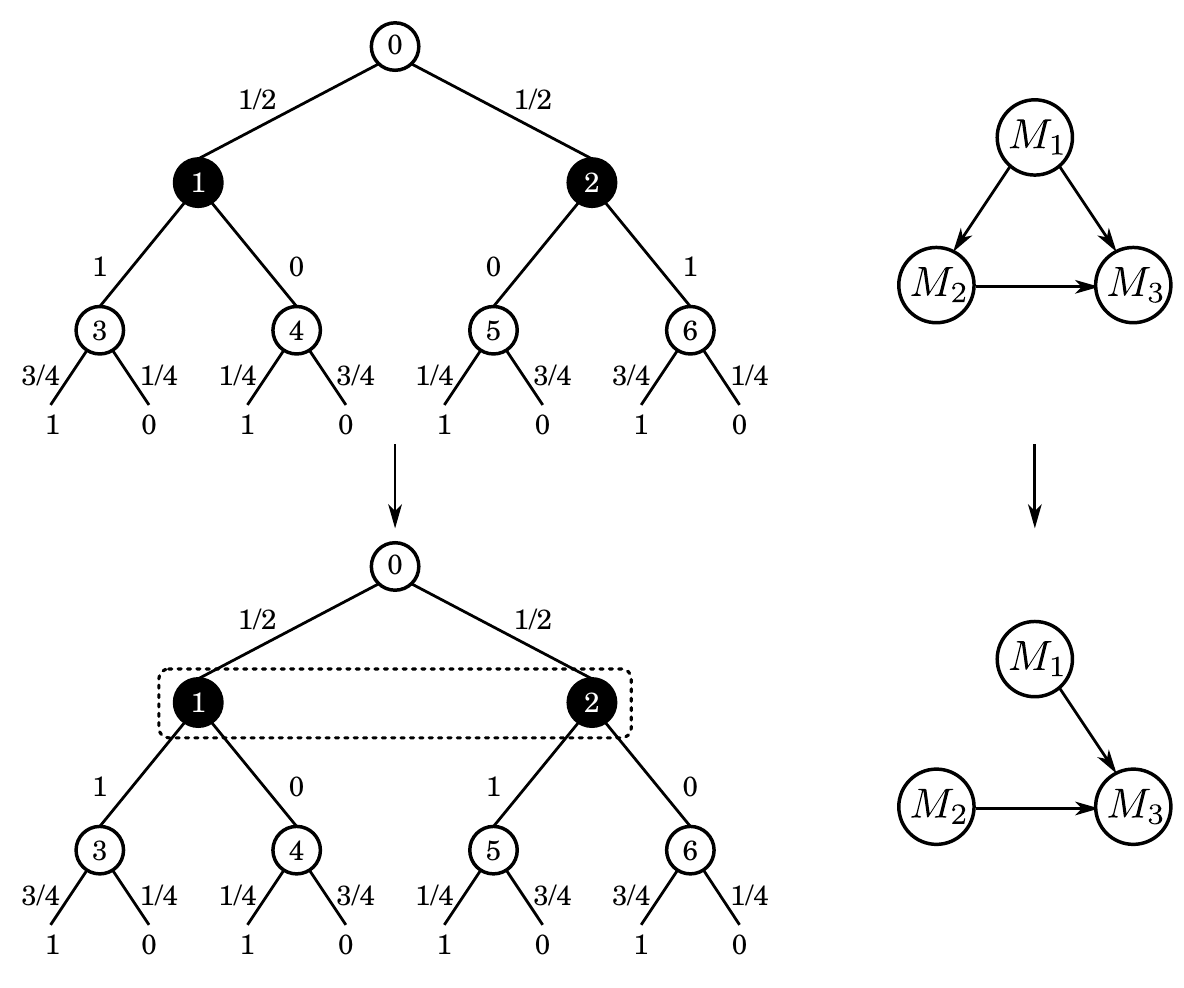}
\caption{The causal intervention is highlighted when comparing the game dynamics
before and after introducing the information sets of Black.}
\label{fig:game-intervention} %
\end{figure}

\section{The Abstract Subject}\label{sec:abstract-subject}

The aim of this section is to present an abstract model of the subject.
In particular, I have dedicated much effort into elucidating the links 
to measure-theoretic probability, which currently holds the status of
providing the standard foundations for abstract probability theory.

\subsection{Realisations and Causal Spaces}

First we introduce a structure that models the states of realisation of a random
experiment. 

\begin{definition}[Realisation]
A set $\set{R}$ of non-empty subsets of $\Omega$ is called a \emph{set of
realisations} iff
\begin{enumerate}
    \item[R1.] \emph{the sure event is a realisation:} \\
      $\Omega \in \set{R}$;
    \item[R2.] \emph{realisations form a tree:}\\
      for each distinct $U, V \in \set{R}$, either $U \cap V = \varnothing$
      or $U \subset V$ or $V \subset U$;
    \item[R3.] \emph{the tree is complete:}\\
      for each $U, V \in \set{R}$ where $V \subset U$, there exists a
      sequence $(V_n)_{n \in \nats}$ in $\set{R}$ such that $U \setminus V
      = \bigcup_n V_n$.
    \item[R4.] \emph{every branch has a starting and an end point:}\\
      let $(V_n)_{n \in \nats} \in \set{R}$ be such that $V_n \uparrow V$ or
      $V_n \downarrow V$. Then, $V \in \set{R}$.
\end{enumerate}
A member $U \in \set{R}$ is called a \emph{realisation} or a \emph{realisable
event}. Given two realisations $U, V \in \set{R}$, we say that $U$
\emph{precedes} $V$ iff $U \supset V$. Given two subsets $\set{U}, \set{V}
\subset \set{R}$, we say that $\set{U}$ \emph{precedes} $\set{V}$ iff for every
$V \in \set{V}$, there exists an element $U \in \set{U}$ such that $U$ precedes
$V$. Analogously, we also say that $V$ \emph{follows} $U$ iff $U$ precedes $V$,
and that $\set{V}$ \emph{follows} $\set{U}$ iff $\set{U}$ precedes $\set{V}$.
Finally, two realisations $U, V \in \set{R}$ that neither precede nor follow
each other are said to be \emph{incomparable}.
\end{definition}

From axioms~R1--R3, it is clearly seen that a set of realisations is
essentially a tree of nested subsets of the sample space, rooted at the sample
space. Axiom~R4 includes the upper and lower limits of realisation sequences,
thus constituting a formalisation of the fourth postulate causal reasoning. One
important difference to standard $\sigma$-algebras is that the complement is, in
general, \emph{not} in the algebra, the only exception being the impossible
realisation.

An immediate consequence of this definition is that the set of realisations
forms a partial order among its members. The partial order is the fundamental
requirement for modelling causal dependencies.

\begin{proposition}\label{prop:partial-order}[Partial Order]
A set of realisations $\set{R}$ endowed with the set inclusion $\subset$ forms
a partial order.
\end{proposition}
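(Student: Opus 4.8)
The plan is to verify the defining properties of a partial order directly, using that $\set{R}$ is a subcollection of the power set $\power(\Omega)$ and that set inclusion is already known to be a partial order on $\power(\Omega)$. Concretely, I would check reflexivity, antisymmetry, and transitivity one by one and then note that the restriction of a partial order to a subset is again a partial order.

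Reflexivity is immediate: $U \subseteq U$ for every $U \in \set{R}$, since every element of $U$ is an element of $U$. Antisymmetry follows from extensionality: if $U \subseteq V$ and $V \subseteq U$ for $U, V \in \set{R}$, then $U = V$. Transitivity is equally routine: if $U \subseteq V$ and $V \subseteq W$ for $U, V, W \in \set{R}$, then every element of $U$ lies in $V$ and hence in $W$, so $U \subseteq W$. Since all three statements hold for arbitrary subsets of $\Omega$, they hold in particular for the members of $\set{R}$. (If $\subset$ is intended in the strict sense, one checks irreflexivity together with transitivity, from which asymmetry follows, and nothing else changes.)

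I do not anticipate any obstacle: axioms R1--R4 play no role in this proposition, because being ordered by inclusion is a generic property of any family of sets. Those axioms become relevant only in the subsequent development — R2, for example, forces the order to have the shape of a tree and guarantees the existence of genuinely incomparable realisations, while R3 and R4 control completeness — but establishing that $(\set{R}, \subset)$ is a partial order requires none of that additional structure.
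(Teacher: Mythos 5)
Your proof is correct and follows essentially the same route as the paper: both verify reflexivity, antisymmetry, and transitivity directly, observing that the order is simply inherited from $(\power(\Omega), \subset)$ and that axioms R1--R4 play no role here.
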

\begin{proof}
Trivial, because it is inherited from $(\power(\Omega), \subset)$: it is
reflexive, since for each $U \in \set{R}$, $U \subset U$; it is antisymmetric,
since for each $U, V \in \set{R}$, if $U \subset V$ and $V \subset U$ then $U =
V$; and it is transitive, because for all $U, V, W \in \set{R}$, if $W \subset
V$ and $V \subset U$ then $W \subset U$.
\end{proof}

The intuition here is that ``$U \supset V$'' corresponds to the intuitive notion
of ``$V$ depends causally on $U$'', \textit{i.e.}\ the veracity of $V$ can only be
determined insofar $U$ is known to have obtained; and ``$U \cap V =
\varnothing$'' means that ``$V$ and $U$ are causally independent''.

A set of realisations can be visualised as a tree of nested sets. For instance, 
Fig.~\ref{fig:realisation-set} is a possible set of realisations for the experiment
in Fig.~\ref{fig:experiment}. Here, the sure event~$\Omega$ at the root is partitioned
recursively into branches until reaching the leaves representing the termination of the experiment.

\begin{figure}[htbp]
\centering %
\includegraphics[width=0.8\columnwidth]{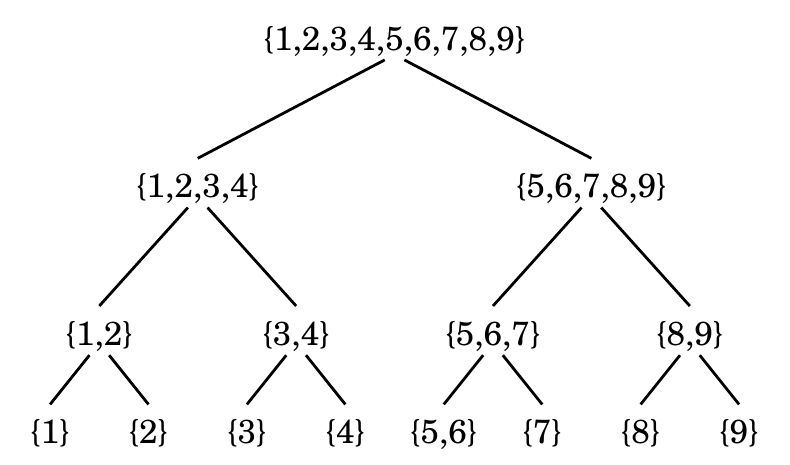}
\caption{A realisation set for the experiment.}
\label{fig:realisation-set} %
\end{figure}

Next we define an important class of events of the experiment, namely those
that can be thought of as a union of causal histories that are potentially
incompatible.

\begin{definition}[Representation]
A subset $A \subset \Omega$ is said to have a \emph{representation} in
$\set{R}$ iff there exists a sequence $(A_n)_{n \in \nats}$ in $\set{R}$ such
that $\bigcup_n A_n = A$. The \emph{set of representable events} $r(\set{R})$ is
the collection of all the subsets of $\Omega$ that have a representation in~$\set{R}$.
\end{definition}

For instance, consider the subsets
\[
A_1 = \{2,5,6\}, \quad
A_2 = \{3,4\} \quad \text{and} \quad
A_3 = \{4,5\}.
\]
$A_1$ has a unique representation given by $\mathcal{A}_1 
= \{\{ 2 \}, \{ 5,6 \}\}$; $A_2$ has two representations, namely $\mathcal{A}_2
= \{ \{3\}, \{4\} \}$ and $\mathcal{A}'_2 = \{ \{3,4\} \}$; and $A_3$ does not
have a representation.

It turns out that the set of representable events has a fundamental property: it
coincides with the $\sigma$-algebra generated by $\set{R}$. This means that
every event of the experiment can be thought of as corresponding to a
collection of possibly mutually exclusive realisations.

\begin{theorem}[Representation]\label{thm:representation}
Let $\set{R}$ be a set of realisations, let $r(\set{R})$ be the set of
representable events in $\set{R}$ and let $\sigma(\set{R})$ be the
$\sigma$-algebra generated by~$\set{R}$. Then, $r(\set{R}) = \sigma(\set{R})$.
\end{theorem}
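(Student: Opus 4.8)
The plan is to prove the two inclusions $r(\set{R}) \subseteq \sigma(\set{R})$ and $\sigma(\set{R}) \subseteq r(\set{R})$ separately. The first is immediate: if $A \in r(\set{R})$ then $A = \bigcup_n A_n$ with each $A_n \in \set{R} \subseteq \sigma(\set{R})$, and a $\sigma$-algebra is closed under countable unions, so $A \in \sigma(\set{R})$. The second inclusion is the substantive part. Since $\sigma(\set{R})$ is the smallest $\sigma$-algebra containing $\set{R}$, and $\set{R} \subseteq r(\set{R})$ trivially (each $U \in \set{R}$ is represented by the one-term sequence $(U)$), it suffices to show that $r(\set{R})$ is itself a $\sigma$-algebra on $\Omega$.

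So the core of the proof is the verification that $r(\set{R})$ is closed under (i) complementation and (ii) countable unions, and contains $\Omega$. Containment of $\Omega$ follows from R1. Closure under countable unions is easy: if $A^{(k)} = \bigcup_n A^{(k)}_n$ for $k \in \nats$, then $\bigcup_k A^{(k)} = \bigcup_{k,n} A^{(k)}_n$ is again a countable union of members of $\set{R}$, hence representable. The main obstacle is closure under complementation. Given $A = \bigcup_n A_n$ with $A_n \in \set{R}$, I need to exhibit $\Omega \setminus A$ as a countable union of realisations. The natural idea is to first assume the representation is a \emph{disjoint antichain} (this can be arranged: if $A_m \subseteq A_n$ simply discard $A_m$; using R2, any two realisations are either nested or disjoint, so after removing non-maximal elements one gets a countable disjoint family with the same union), and then use the tree structure. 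The key tool is axiom R3: since $A_n \subset \Omega$ and $\Omega \in \set{R}$, completeness gives a sequence in $\set{R}$ covering $\Omega \setminus A_n$. One then needs to intersect/combine these complements across all $n$ in a way that stays inside $r(\set{R})$; this requires showing $r(\set{R})$ is closed under \emph{finite} (and then countable) intersections as an intermediate lemma, again leaning on R2 to write intersections of realisations as realisations or $\varnothing$.

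Concretely, I would structure it as: \textbf{Step 1}, show every representable set has a representation by a disjoint antichain of realisations (using R2). \textbf{Step 2}, show $r(\set{R})$ is closed under finite intersections — for $U, V \in \set{R}$, $U \cap V$ is one of $U$, $V$, or $\varnothing$ by R2, so it lies in $r(\set{R})$ (adopting the convention that $\varnothing$, being the empty union, is representable, or handling it as a degenerate case); then distribute over the unions in the representations of two general representable sets. \textbf{Step 3}, show $r(\set{R})$ is closed under complement: write $A = \bigcup_n A_n$ with the $A_n$ a disjoint antichain; for each $n$, R3 applied to $A_n \subset \Omega$ yields $\Omega \setminus A_n = \bigcup_m V^{(n)}_m \in r(\set{R})$; then $\Omega \setminus A = \bigcap_n (\Omega \setminus A_n)$, which is a countable intersection of representable sets, hence representable by Step 2 extended to countable intersections (a countable intersection of countable unions of realisations, re-expanded and pruned via R2, is again a countable union of realisations). \textbf{Step 4}, conclude $r(\set{R})$ is a $\sigma$-algebra containing $\set{R}$, hence $\sigma(\set{R}) \subseteq r(\set{R})$, and combine with the trivial inclusion.

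The main obstacle I anticipate is making Step 3 fully rigorous: verifying that the countable intersection of complements genuinely lands back in $r(\set{R})$ without an uncountable blow-up, and that R3 can indeed be invoked for each $A_n$ (it is stated for $V \subset U$ with both in $\set{R}$, which applies since $A_n \in \set{R}$ and $A_n \subsetneq \Omega \in \set{R}$, with the edge case $A_n = \Omega$ handled trivially). One must be careful that the sequences furnished by R3 are themselves disjoint antichains or can be pruned to such, so that the bookkeeping of re-expanding the intersection of unions into a single union of realisations actually closes up. The role of R4 in this particular theorem seems minor — it is the completeness/limit axiom needed elsewhere — so I would not expect to use it here beyond possibly ensuring the pruned families still have their limits available; the heavy lifting is done by R1–R3.
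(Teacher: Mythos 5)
Your proposal is correct and follows essentially the same route as the paper: the inclusion $r(\set{R}) \subseteq \sigma(\set{R})$ is immediate, and the reverse inclusion is obtained by showing representability is preserved under countable unions (diagonal enumeration) and under complements (R3 for a single realisation, then De Morgan plus closure of $r(\set{R})$ under intersections via R2). The one obstacle you flag in Step~3 is handled in the paper exactly as your Step~2 anticipates: one distributes $\bigcap_n \bigcup_m V_{n,m}$ into $\bigcup_{f:\nats\to\nats} \bigcap_n V_{n,f(n)}$, notes that each such intersection is either $\varnothing$ or a member of $\set{R}$ (R2, with R4 needed for infinite descending chains), and relies on the standing countability of $\set{R}$ so that the distinct nonempty intersections form a countable subfamily, collapsing the union back to a representation.
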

\begin{proof}
\emph{Case $r(\set{R}) \subset \sigma(\set{R})$:} This follows directly from the
definition of a $\sigma$-algebra. \emph{Case $r(\set{R}) \supset
\sigma(\set{R})$:} We prove this by induction. For the base case, let $(V_n)_{n
\nats}$ be a sequence in $\set{R}$. Then, $V = \bigcup_n V_n \in
\sigma(\set{R})$ has a representation in $\set{R}$. Furthermore, $V \in \set{R}$
implies that there exists $(V_n)_{n \in \nats}$ in $\set{R}$ such that $\Omega
\setminus V = \bigcup_n V_n$ (Axiom~R3). Hence, $V^c \in \sigma(\set{R})$ too
has a representation in $\set{R}$. For the induction case, assume we have a
sequence $(A_n)_{n \in \nats}$ with representations $(\set{A}_n)_{n \in \nats}$
respectively, where $\set{A}_n = (V_{n,m})_{m \in \nats}$ for each $n \in
\nats$. Then,
\[
  \bigcup_n A_n
  = \bigcup_n \bigcup_m V_{n,m}
  = \bigcup_l V_l,
\]
where $l \in \nats$ is a diagonal enumeration of the $(n,m) \in \nats \times
\nats$. Obviously, $(V_l)_{l \in \nats}$ is a representation for $\bigcup_n A_n
\in \sigma(\set{R})$. Now, assume that $A \in \sigma(\set{A})$ has a
representation $(A_n)_{n \in \nats}$. Then,
\[
  A^c
  = \biggl( \bigcup_n A_n \biggr)^c
  = \bigcap_n A_n^c.
\]
Since the $A_n$ are in $\set{R}$, their complements $A_n^c$ have
representations $(V_{n,m})_{m \in \nats}$. Hence,
\[
  A^c
  = \bigcap_n A_n^c
  = \bigcap_n \bigcup_m V_{n,m}
  = \bigcup_{f:\nats \rightarrow \nats} \bigcap_n V_{n,f(n)},
\]
where the last equality holds due to the extensionality property of sets. More
specifically, for $\omega \in \Omega$ to be a member of the l.h.s., there must
be an $m$ for each $n$ such that $\omega \in V_{n,m}$. This is true in
particular for the map $f$ that chooses the smallest $m$ for each $n$. Hence,
$\omega$ is a member of the r.h.s. Now, consider an element $\omega \in \Omega$
that is not in the l.h.s. Then, there exists some $n$ such that $\omega \notin
V_{n,m}$ for all $m$. Since, for this particular $n$, this is false for any
choice of $f$ in the r.h.s., $\omega$ is not a member of the r.h.s., which
proves the equality. Finally, since intersections of members $V_{n,m}$ of
$\set{R}$ are either equal to $\varnothing$ or equal to a member $V_l$ of
$\set{R}$ (Axiom~R2), one has $A^c = \bigcup_l V_l$ for some $(V_l)_{l \in
\nats}$, which is a representation of $A^c$.
\end{proof}

Having defined the basic structure of realisations, we now place probabilities
on them. However, rather than working with the standard (unconditional)
probability measure~$\prob$ that is customary in measure-theory, here---as
is also the case in Bayesian probability theory \citep{Cox1961,Jaynes2003}---it
is much more natural to work directly with a conditional probability measure
$\prob(\cdot|\cdot)$. One way to establish the connection to the standard
measure-theoretic view consists in thinking of the conditional probability
measure as a function such that $\prob(A|\Omega) := \prob(A)$ and 
$\prob(A|U) := \prob(A \cap U) / \prob(U)$ whenever $U \in \set{R}$ is such that
$\prob(U) > 0$. Henceforth, we will drop the qualifier ``conditional'', and
just talk about the ``probability measure'' $P(\cdot|\cdot)$. 

\begin{definition}[Causal Measure]
Given a set of realisations $\set{R}$, a \emph{causal measure} is a binary set
function $\prob(\cdot|\cdot): \set{R} \times \set{R} \rightarrow
[0,1]$, such that
\begin{enumerate}
    \item[C1.] \emph{the past is certain:}\\
    For any $V, U \in \set{R}$, if $U$ precedes $V$, then
    \[ \prob(U|V) = 1; \]
    \item[C2.] \emph{incomparable realisations are impossible:}\\
    For any incomparable $V, U \in \set{R}$,
    \[ \prob(V|U) = 0; \]
    \item[C3.] \emph{sum-rule:}\\
    For any $U \in \set{R}$ and any disjoint sequence $(V_n)_{n \in \nats}$ such
    that $V_n$ follows $U$ for all $n \in \nats$ and $\bigcup_{n \in \nats} V_n
    = U$,
    \[ \sum_{n \in \nats} \prob(V_n|U) = 1; \]
    \item[C4.] \emph{product-rule:}\\
    For any $U, V, W \in \set{R}$ such that $W$ follows $V$ and $V$ follows $U$,
    \[ \prob(W|U) = \prob(W|V) \cdot \prob(V|U). \]
\end{enumerate}
\end{definition}

Thus, a causal measure is defined only over $\set{R} \times \set{R}$,
providing a supporting skeleton for the construction of a full-fledged
probability measure extending over the entire $\sigma$-algebra.
A simple way of visually representing a causal measure is by indicating 
the transition probabilities in the corresponding tree diagram, 
as illustrated in Fig.~\ref{fig:causal-space}. In the figure, the
sets have been replaced with labels, \textit{e.g.}\ $S_0 = \Omega$ and $S_4 = \{3,4\}$. 

\begin{definition}[Compatible Probability Measure]
Given a causal measure $\prob$ over a set of realisations $\set{R}$, a
probability measure $\prob'(\cdot|\cdot): \sigma(\set{R}) 
\times \sigma(\set{R}) \rightarrow [0,1]$ is said to be \emph{compatible}
with $\prob$ iff $\prob' = \prob$ on $\set{R} \times \set{R}$.
\end{definition}

\begin{figure}[htbp]
\centering %
\includegraphics[width=0.8\columnwidth]{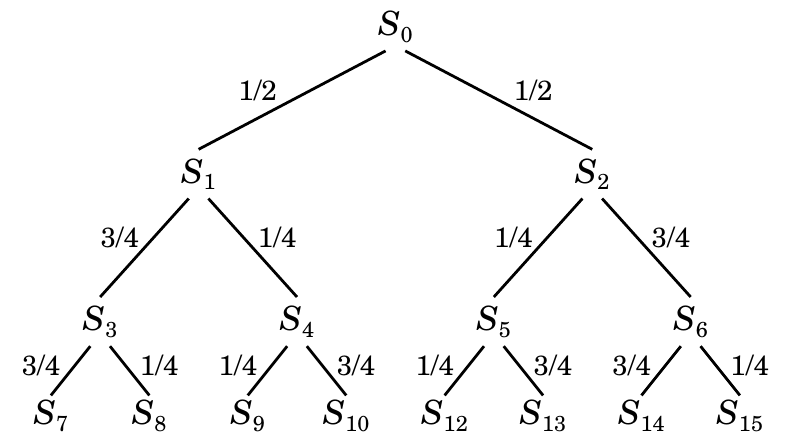}
\caption{A causal space for the experiment.}
\label{fig:causal-space} %
\end{figure}

It turns out that the causal measure almost completely determines its
compatible probability measures, the exception being the probabilities
conditioned on events that are not realisations. To show this,
we first introduce a definition.

\begin{definition}[$\set{R}_U$, $\Sigma_U$]
Let $\set{R}$ be a set of realisations. For any given $U$, define
$\set{R}_U := U \cap \set{R}$ and $\Sigma_U := U \cap \sigma(\set{R})$.
\end{definition}

Observe that $\set{R}_U$ is a set of realisations based on $U$ as the
sample space. Furthermore, it is well-known \citep[Chapter~1.2]{Ash1999} that
\[
  \Sigma_U = U \cap \sigma(\set{R}) = \sigma_U(U \cap \set{R})
\]
where $\sigma_U(U \cap \set{R})$ is the $\sigma$-algebra generated by subsets
of $U$, \textit{i.e.}\ where $U$ rather than $\Omega$ is taken as the sample space. The
aforementioned uniqueness result follows.

\begin{proposition}\label{prop:uniqueness-causal-measure}
Let $\prob_1$ and $\prob_2$ be two probability measures that are compatible with
a causal measure $\prob$ over $\set{R}$. Then, for each $U \in \set{R}$, $V \in
\Sigma_U$, $\prob_1(V|U) = \prob_2(V|U)$.
\end{proposition}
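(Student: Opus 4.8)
The plan is to reduce the statement to the classical uniqueness theorem for measures: two probability measures that agree on a $\pi$-system agree on the $\sigma$-algebra it generates. Fix $U \in \set{R}$. First I would record, using the displayed identity immediately preceding the statement, that $\Sigma_U$ is generated over $U$ (taken as sample space) by the collection $\set{R}_U = U \cap \set{R}$, and that each of the set functions $V \mapsto \prob_1(V|U)$ and $V \mapsto \prob_2(V|U)$, restricted to $\Sigma_U$, is a genuine probability measure on $(U,\Sigma_U)$ --- in particular of total mass one, since $U$ precedes itself and hence $\prob_i(U|U) = \prob(U|U) = 1$ by Axiom~C1. It then suffices to check two things: that $\prob_1$ and $\prob_2$ already agree on $\set{R}_U$, and that $\set{R}_U$ is a $\pi$-system generating $\Sigma_U$.

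For the first point, I would note that the non-empty members of $\set{R}_U$ are exactly the realisations $W \in \set{R}$ with $W \subset U$ (using $U \in \set{R}$ and Axiom~R2 to see that $U \cap W \in \set{R} \cup \{\varnothing\}$ for every $W \in \set{R}$); since then $W, U \in \set{R}$, compatibility forces $\prob_1(W|U) = \prob(W|U) = \prob_2(W|U)$, while on $\varnothing$ both functions vanish. For the second, I would take $W_1, W_2 \in \set{R}_U$ and invoke Axiom~R2 again (realisations form a tree): $W_1 \cap W_2$ is either $\varnothing$ or one of $W_1, W_2$, so after adjoining $\varnothing$ --- which does not alter the generated $\sigma$-algebra --- the collection is closed under pairwise, hence finite, intersections, and the intersection clearly remains a subset of $U$. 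Thus $\set{R}_U$ (with $\varnothing$ adjoined) is a $\pi$-system generating $\Sigma_U$.

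With these facts in place, the last step is to apply the $\pi$--$\lambda$ theorem (as recorded, \textit{e.g.}, in \citealt{Ash1999}) to the two probability measures $\prob_1(\cdot|U)$ and $\prob_2(\cdot|U)$ on $(U,\Sigma_U)$, which immediately yields $\prob_1(V|U) = \prob_2(V|U)$ for every $V \in \Sigma_U$. I expect the only delicate part to be the verification that $\set{R}_U$ is a $\pi$-system: this is exactly where the tree axiom~R2 is indispensable, and it is also what forces the minor bookkeeping about the empty set. Everything else is routine, provided one takes care to confirm that both conditional set functions restrict to honest probability measures of full mass on $(U,\Sigma_U)$ --- which is precisely the hypothesis that makes the uniqueness theorem applicable.
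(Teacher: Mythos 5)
Your proof is correct and follows essentially the same route as the paper's: show that $\set{R}_U$ is a $\pi$-system generating $\Sigma_U$ via Axiom~R2, note that compatibility forces $\prob_1$ and $\prob_2$ to agree on it, and invoke the standard uniqueness lemma for measures agreeing on a $\pi$-system. Your extra bookkeeping about adjoining $\varnothing$ and checking that the conditional set functions are honest probability measures of full mass is a welcome tightening of details the paper leaves implicit.
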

\begin{proof}
First we note that each $\set{R}_U$ is a $\pi$-system, \textit{i.e.}\  a family of
subsets of $U$ that is stable under finite intersection: $U, V \in \set{R}_U$
implies $U \cap V \in \set{R}_U$. This is because, for any $U \in
\set{R}_U$, $U \cap U \in \set{R}_U$; and for all distinct $U, V \in \set{R}_U$,
either $U \cap V = \varnothing$ or $U \subset V$ or $V \subset U$ implies that
either $U \cap V = \varnothing = U \setminus U$ or $U \cap V = U$ or $V \cap U =
V$, which are all members of $\set{R}_U$.

Next we prove that for each $U \in \set{R}$, $V \in \Sigma_U$,
$\prob_1(V|U) = \prob_2(V|U)$. Lemma~1.6. in \cite{Williams1991} states that,
if two probability measures agree on a $\pi$-system, then they also agree on the
$\sigma$-algebra generated by the $\pi$-system. Pick any $U \in \set{R}$.
Applying the lemma, we conclude that for all $V \in \Sigma_U$, $\prob_1(V|U) =
\prob_2(V|U)$. Since $U \in \set{R}$ is arbitrary, the statement of the
proposition is proven.
\end{proof}

Given the previous definition, we are ready to define our main object: the
causal space. A causal space, like a standard probability space,
serves the purpose of characterising a random experiment, but with the
important difference that it also contains information about the causal
dependencies among the events of the experiment.

\begin{definition}[Causal Space]
A \emph{causal space} is a tuple $C = (\Omega, \set{R}, \prob)$, where $\Omega$
is a set of outcomes, $\set{R}$ is a set of realisations on $\Omega$, and
$\prob$ is a causal measure over $\set{R}$.
\end{definition}

Intuitively, it is clear that a causal space contains enough information to
characterise probability spaces that represent the same experiment. These
probability spaces are defined as follows.

\begin{definition}[Compatible Probability Space]
Given a causal space $C = (\Omega, \set{R}, \prob)$, a probability space
$S = (\Omega, \sfield, \prob')$ is said to be compatible with $C$ if $\sfield
= \sigma(\set{R})$ and $\prob'$ is \emph{compatible} with $\prob$.
\end{definition}

An immediate consequence of the previous results is that compatible probability
spaces are essentially unique.

\begin{corollary}
Let $S_1 = (\Omega, \sfield_1, \prob_1)$ and $S_2 =
(\Omega, \sfield_2, \prob_2)$ be two probability spaces compatible with a
given causal space $C = (\Omega, \set{R}, \prob)$. Then,
\begin{enumerate}
 \item their $\sigma$-algebras are equal, \textit{i.e.}\ $\sfield_1 = \sfield_2$;
 \item and their probability measures are equal on any condition $U \in \set{R}$
  and $\sigma$-algebras $\Sigma_U$, \textit{i.e.}\ for any $U \in \set{R}$, $V \in
  \Sigma_U$, $\prob_1(V|U) = \prob_2(V|U)$.
\end{enumerate}
\end{corollary}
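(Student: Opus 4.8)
The plan is to observe that this corollary is an immediate packaging of Theorem~\ref{thm:representation} together with Proposition~\ref{prop:uniqueness-causal-measure}, so the work is essentially bookkeeping rather than new mathematics. First I would unpack the hypothesis: saying that $S_1$ and $S_2$ are compatible with $C = (\Omega, \set{R}, \prob)$ means, by the definition of a compatible probability space, that $\sfield_1 = \sigma(\set{R})$ and $\sfield_2 = \sigma(\set{R})$, and that both $\prob_1$ and $\prob_2$ agree with $\prob$ on $\set{R} \times \set{R}$ (i.e. each is a compatible probability measure in the sense of the Compatible Probability Measure definition).

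For part~(1), the equality $\sfield_1 = \sfield_2$ is then trivial: both are literally $\sigma(\set{R})$ by the compatibility requirement, so $\sfield_1 = \sigma(\set{R}) = \sfield_2$. (One could also remark that Theorem~\ref{thm:representation} identifies this common $\sigma$-algebra with the set of representable events $r(\set{R})$, but that is not needed for the statement.)

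For part~(2), I would invoke Proposition~\ref{prop:uniqueness-causal-measure} directly. Both $\prob_1$ and $\prob_2$ are probability measures compatible with the causal measure $\prob$ over $\set{R}$ — this is exactly the hypothesis of that proposition — so the proposition yields, for every $U \in \set{R}$ and every $V \in \Sigma_U$, the equality $\prob_1(V|U) = \prob_2(V|U)$, which is precisely what is claimed.

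The only point requiring a word of care — and the nearest thing to an "obstacle" — is making sure the quantifier scoping matches: the corollary asserts agreement on conditions $U \in \set{R}$ and on the localized $\sigma$-algebras $\Sigma_U = U \cap \sigma(\set{R})$, not on all of $\sigma(\set{R}) \times \sigma(\set{R})$, and indeed the discussion preceding the corollary already flags that probabilities conditioned on non-realisation events are \emph{not} pinned down by the causal measure. So I would simply restate the conclusion with the restricted quantifiers exactly as in Proposition~\ref{prop:uniqueness-causal-measure}, and no strengthening is attempted. In short, the proof is: (1) both $\sigma$-algebras equal $\sigma(\set{R})$ by definition of compatibility; (2) apply Proposition~\ref{prop:uniqueness-causal-measure} verbatim.
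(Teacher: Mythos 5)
Your proof is correct and matches the paper's intent exactly: the corollary is stated there as ``an immediate consequence of the previous results,'' with part~(1) following from the definition of a compatible probability space (both $\sigma$-algebras are $\sigma(\set{R})$) and part~(2) being a verbatim application of Proposition~\ref{prop:uniqueness-causal-measure}. Your remark on the restricted quantifiers ($\Sigma_U$ rather than all of $\sigma(\set{R}) \times \sigma(\set{R})$) is also consistent with the paper's own caveat preceding the proposition.
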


Importantly though, one cannot derive a unique causal space from a probability
space; that is to say, given a probability space, there is in general more than
one causal space that can give rise to it. Crucially, these causal spaces can
differ in the causal dependencies they enforce on the events of the
experiment, thus representing incompatible causal realisations.

\subsection{Causal Interventions}

The additional causal information contained in causal spaces serve the
purpose of characterising cause-effect relations (\textit{e.g.}\  functional dependencies)
and the effects of interventions of a random experiment. Interventions modify 
realisation processes in order to steer the outcome of a random experiment 
into a desired direction.

We begin this subsection with the formalisation of a sub-process as a sequence
of realisations of the experiment, that is, as a realisation \emph{interval}.

\begin{definition}[Interval]
Let $U, V \in \set{R}$. Define 
\[
  \set{I} := \{ W \in \set{R}: U \supset W \text{ and } W \supset V \}.
\]
Then, based on $\set{I}$, define:
\begin{align*}
 [U,V]_\set{R} &:= \set{I},
  && \text{(closed interval)}\\
 (U,V]_\set{R} &:= \set{I} \setminus \{U\},
  && \text{(right-closed interval)}\\
 [U,V)_\set{R} &:= \set{I} \setminus \{V\},
  && \text{(left-closed interval)}\\
 (U,V)_\set{R} &:= \set{I} \setminus \{U, V\}.
  && \text{(open interval)}
\end{align*}
\end{definition}

Obviously, for all $U, V \in \set{R}$,
\[
  [U, V]_\set{R} \neq \varnothing \qquad \Leftrightarrow \qquad U \supset V,
\]
so that non-empty intervals necessarily have a causal direction. Although
the previous definition covers open, half-open, and closed intervals, we will
see further down that only closed intervals play an important r\^{o}le
in the context of interventions.

For example, in Fig.~\ref{fig:causal-space} we have:
\begin{align*}
  [S_0,S_{12}]_\set{R} &= \{S_0, S_2, S_5, S_{12}\} \\
  [S_0,S_{12})_\set{R} &= \{S_0, S_2, S_5\} \\
  [S_{12},S_0]_\set{R} &= \varnothing.
\end{align*}

In a random experiment, two sub-processes with the same initial conditions
can lead to two different outcomes. Next, I define a \emph{bifurcation} and a
\emph{discriminant}, the former corresponding to the exact moment when these
two processes separate from each other and the latter to the instant
right afterwards---that is, the instant that unambiguously determines the
start of a new causal course. Notice that in what follows, I will drop the
subscript $\set{R}$ when it is clear from the context.

\begin{definition}[Bifurcations \& Discriminants]
Let $\set{R}$ be a set of realisations, and let $\set{I}_1 = [U, V_1]$ and
$\set{I}_2 = [U, V_2]$ be two closed intervals in $\set{R}$ with same initial
starting point $U$ and non-overlapping endpoints $V_1 \cap V_2 = \varnothing$. A
member $\lambda \in \set{R}$ is said to be a \emph{bifurcation} of $\set{I}_1$
and $\set{I}_2$ iff $[U, \lambda] = \set{I}_1 \cap \set{I}_2$. A member $\xi
\in \set{R}$ is said to be a \emph{discriminant} of $\set{I}_1$ from
$\set{I}_2$ iff $\set{I}_1 \setminus \set{I}_2 = [\xi, V_1]$.
\end{definition}

For instance, relative to the causal space in~Fig.~\ref{fig:causal-space}, consider
the intervals
\[
  [S_0, S_7]_\set{R} \quad \text{and} \quad [S_0, S_9]_\set{R}.
\]
Then, the bifurcation is $S_1$, because
\[
  [S_0, S_7]_\set{R} \cap [S_0, S_9]_\set{R} = [S_0, S_1]_\set{R};
\]
and their discriminants are $S_3$ and $S_4$ respectively, because
\begin{align*}
  [S_0, S_7]_\set{R} &\setminus [S_0, S_9]_\set{R} = [S_3, S_7]_\set{R}
  \quad \text{and} \\
  [S_0, S_9]_\set{R} &\setminus [S_0, S_7]_\set{R} = [S_4, S_9]_\set{R}.
\end{align*}

In principle, bifurcations and discriminants might not exist; or if they exist,
they might not be unique. The following lemma disproves this possibility by
showing that bifurcations and discriminants always exist and are unique.

\begin{lemma}\label{lem:bif}
Let $\set{R}$ be a set of realisations, and let $\set{I}_1 = [U, V_1]$ and
$\set{I}_2 = [U, V_2]$ be two closed intervals in $\set{R}$ with same initial
starting point $U$ and non-overlapping endpoints $V_1 \cap V_2 = \varnothing$.
Then, there exists
\begin{enumerate}
 \item[a)] a unique bifurcation of $\set{I}_1$ and $\set{I}_2$;
 \item[b)] a unique discriminant of $\set{I}_1$ from $\set{I}_2$;
 \item[c)] and a unique discriminant of $\set{I}_2$ from $\set{I}_1$.
\end{enumerate}
\end{lemma}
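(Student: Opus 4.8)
The plan is to exploit the fact that, since $V_1$ and $V_2$ are non-empty, \emph{every} realisation in $\set{I}_1$ contains $V_1$ and every realisation in $\set{I}_2$ contains $V_2$; hence by axiom~R2 any two members of $\set{I}_1$, of $\set{I}_2$, of $\set{I}_1\cap\set{I}_2$, or of $\set{I}_1\setminus\set{I}_2$ are comparable under $\supset$ — i.e.\ each of these four families is a \emph{chain}. The whole argument then reduces to showing that two particular chains attain their extrema inside $\set{R}$, and the tools for that are the countability of $\set{R}$ together with axiom~R4.

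\textbf{Bifurcation.} First I would observe $\set{I}_1\cap\set{I}_2=\{W\in\set{R}:U\supset W\supset V_1\text{ and }W\supset V_2\}$, which contains $U$ and is therefore non-empty. Set $\lambda:=\bigcap_{W\in\set{I}_1\cap\set{I}_2}W$. To show $\lambda\in\set{R}$: enumerate this chain (countable, as a subfamily of $\set{R}$) and let $D_n$ be the smallest of its first $n$ members; then $D_n\in\set{R}$ and $D_n\downarrow\lambda$, so R4 gives $\lambda\in\set{R}$. Since $\lambda\supset V_1$, $\lambda\supset V_2$ and $U\supset\lambda$, we have $\lambda\in\set{I}_1\cap\set{I}_2$, hence $\lambda$ is the minimum of that chain; a one-line double inclusion then yields $[U,\lambda]=\set{I}_1\cap\set{I}_2$, so $\lambda$ is a bifurcation. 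Uniqueness is immediate: if $\lambda'$ is another bifurcation, then $\lambda,\lambda'\in[U,\lambda]=[U,\lambda']$ forces $\lambda\supset\lambda'$ and $\lambda'\supset\lambda$, whence $\lambda=\lambda'$ by antisymmetry (Proposition~\ref{prop:partial-order}).

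\textbf{Discriminant.} For (b), using that every member of $[U,V_1]=\set{I}_1$ is comparable with $\lambda$ (both contain $V_1\neq\varnothing$), I would rewrite $\set{I}_1\setminus\set{I}_2=\{W\in\set{R}:\lambda\supsetneq W\supset V_1\}$, the deleted part being exactly $[U,\lambda]=\set{I}_1\cap\set{I}_2$. This set is non-empty: $\lambda\supset V_1\cup V_2\supsetneq V_1$ since $V_2\neq\varnothing$ is disjoint from $V_1$, so $V_1$ itself lies in it. Put $\xi:=\bigcup_{W\in\set{I}_1\setminus\set{I}_2}W$; by an increasing enumeration and R4, $\xi\in\set{R}$, and clearly $\xi\supset V_1$ and $\xi\subset\lambda$. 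The crucial point is the strict inclusion $\xi\subsetneq\lambda$: every $W\in\set{I}_1\setminus\set{I}_2$ is disjoint from $V_2$ — otherwise R2 makes $W$ and $V_2$ comparable, but $V_2\subset W$ would place $W$ in $\set{I}_2$, and $W\subset V_2$ would give $\varnothing\neq V_1\subset V_2$, both impossible — hence $\xi\cap V_2=\varnothing$ while $\lambda\supset V_2\neq\varnothing$, so $\xi\neq\lambda$. Therefore $\xi\in\set{I}_1\setminus\set{I}_2$ and is its maximum, giving $[\xi,V_1]=\set{I}_1\setminus\set{I}_2$; uniqueness follows once more from antisymmetry. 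Part (c) is the identical argument with the roles of $\set{I}_1$ and $\set{I}_2$ (and of $V_1$ and $V_2$) interchanged, the bifurcation $\lambda$ being unchanged.

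\textbf{Main obstacle.} The only genuine work lies in the two ``limit'' steps — confirming that $\lambda=\bigcap(\set{I}_1\cap\set{I}_2)$ and $\xi=\bigcup(\set{I}_1\setminus\set{I}_2)$ really belong to $\set{R}$, which is precisely what countability plus R4 provide, so one must check the approximating sequences are monotone and stay in $\set{R}$ — and in the strictness $\xi\subsetneq\lambda$, where the hypothesis $V_1\cap V_2=\varnothing$ is essential and is exactly what keeps the discriminant from collapsing onto the bifurcation.
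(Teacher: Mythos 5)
Your proof is correct and follows essentially the same route as the paper's: both arguments use axiom R2 to see that the relevant subfamilies of $[U,V_1]$ and $[U,V_2]$ are chains separated by the predicate ``meets $V_2$'' (equivalently, membership in $\set{I}_1\cap\set{I}_2$), and then invoke axiom R4 to close those chains at $\lambda$ and $\xi$, with uniqueness falling out of antisymmetry. Your write-up is in fact somewhat more careful than the paper's at the two points it glosses over --- the explicit monotone enumeration feeding R4 (which, as you note, needs the countability of $\set{R}$ that the paper assumes) and the verification that $\xi\subsetneq\lambda$ via $V_1\cap V_2=\varnothing$.
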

\begin{proof}
First, we observe that there cannot be any $V \in \set{I}_1$ such that $V \cap
V_1 = \varnothing$. For if this was true, then we would have that for all $W \in
[V, V_1]$, $W \cap V_1 = \varnothing$, which would lead to a contradiction since
we know that for $W = V_1$, $W \cap V_1 \neq \varnothing$. Repeating the same
argument for $\set{I}_2$, we also conclude that there cannot be any $V
\in \set{I}_2$ such that $V \cap V_2 = \varnothing$.

Second, using a similar argument as above, if $V \in \set{I}_1$ is such that $V
\cap V_2 = \varnothing$, then for all $W \in [V, V_1]$, $W \cap V_2 =
\varnothing$; and if $V \in \set{I}_1$ is such that $V \cap V_2 \neq
\varnothing$, then for all $W \in [\Omega, V]$, $W \cap V \neq \varnothing$.
This leads us to conclude that $\set{I}_1$ can be partitioned into
\begin{align*}
  [U,R_1) &:= \{W \in \set{I}_1: W \cap V_2 \neq \varnothing\} \\
  \text{and} \quad
  (S_1,V_1] &:= \{W \in \set{I}_1: W \cap V_2 = \varnothing\},
\end{align*}
for some $R_1, S_1 \subset \Omega$, that is to say, 
where $\set{I}_1 = [U,R_1) \cup (S_1,V_1]$ and
$[U,R_1) \cap (S_1,V_1] = \varnothing$. But, due to axiom~R4, both
intervals must be closed. Hence, in particular, it is true that $[\Omega, R_1)
= [\Omega, \lambda_1]$ for some $\lambda_1 \in \set{I}_1$. Similarly,
$\set{I}_2$ can be partitioned into
\begin{align*}
  [U,R_2) &:= \{W \in \set{I}_2: W \cap V_1 \neq \varnothing\} \\
  \text{and} \quad
  (S_2,V_2] &:= \{W \in \set{I}_2: W \cap V_1 = \varnothing\},
\end{align*}
and again, $[U,R_2) = [U,\lambda_2]$ for some $\lambda_2 \in \set{I}_2$.
Now, if $W \in \set{I}_1 \cup \set{I}_2$, then $W \in [U,\lambda_1]
\Leftrightarrow W \in [U,\lambda_2]$. Hence, $\lambda_1 = \lambda_2$
is unique and is a bifurcation, proving part (a). For parts (b) and (c), 
we note that $(R_1,V_1] = [\xi_1,V_1]$ and $(R_2,V_2] = [\xi_2,V_2]$
for some $\xi_1 \in \set{I}_1 \setminus \set{I}_2$ and $\xi_2 \in \set{I}_2
\setminus \set{I}_1$ respectively due to axiom~R4. But since $\set{I}_1
\setminus \set{I}_2 = \set{I}_1 \setminus [U,\lambda_1] = [\xi_1,V_1]$, and
similarly $\set{I}_2 \setminus \set{I}_1 = [\xi_2,V_2]$, the members $\xi_1$ and
$\xi_2$ are the desired discriminants, and they are unique.
\end{proof}

The important consequence of this lemma is that there is always a pair of
closed intervals $[\lambda, \xi_1]$ and $[\lambda, \xi_2]$ that \emph{precisely}
capture the sub-process, or \emph{mechanism}, during which a realisation can
split into two mutually exclusive causal branches.

To intervene a random experiment in order to give rise to a particular event
$A$, we first need to identify all the sub-processes that can split the course
of the realisation into intervals leading to $A$ or its negation $A^c$. These
sub-processes will start and end at instants that will be called
$A$-bifurcations and $A$-discriminants respectively. Again, a lemma guarantees
that these sub-processes exist and are unique.

\begin{definition}[$A$-Bifurcations, $A$-Discriminants]
Let $\set{R}$ be a set of realisations, and let $A \in \sigma(\set{R})$ be a
member of the generated $\sigma$-algebra of $\set{R}$.
\begin{enumerate}
  \item A member $\lambda \in \set{R}$ is said to be an \emph{$A$-bifurcation}
  iff it is a bifurcation of two intervals $[\Omega, V_A]$ and $[\Omega,
  V_{A^c}]$ with endpoints $V_{A}$ and $V_{A^c}$ in some representations of $A$
  and $A^c$ respectively. The \emph{set of $A$-bifurcations} is the subset
  $\lambda(A) \subset \set{R}$ of all $A$-bifurcations.
  \item Let $\lambda \in \lambda(A)$ be an $A$-bifurcation. A member
  $\xi \in \set{R}$ is said to be  an \emph{$A$-discriminant for $\lambda$} iff
  there exists $V_A$ in a representation of $A$ such that $[\xi, V_A]
  = [\Omega, V_A] \setminus [\Omega, \lambda]$. The \emph{set of
  $A$-discriminants for $\lambda$} is denoted as $\xi(\lambda)$.
\end{enumerate}
\end{definition}

Figure~\ref{fig:bifurcations} illustrates the set of $A$-bifurcations
for $A$ defined as
\[
  A = S_7 \cup S_9 \cup S_{12} \cup S_{13} = \{1, 3, 4, 5, 6, 7\}.
\]
The set of $A$-bifurcations is
\[
  \lambda(A) = \{S_0, S_1, S_2, S_3, S_4\}.
\]
Each member has an associated set of $A$-discriminants. For instance,
\[
  \xi(S_0) = \{ S_1, S_2 \}
  \quad \text{and} \quad
  \xi(S_2) = \{ S_5 \}.
\]
The critical bifurcations that appear in the figure are a subset of the 
bifurcations, and they will be defined later in the text.

\begin{figure}[htbp]
\centering %
\includegraphics[width=0.8\columnwidth]{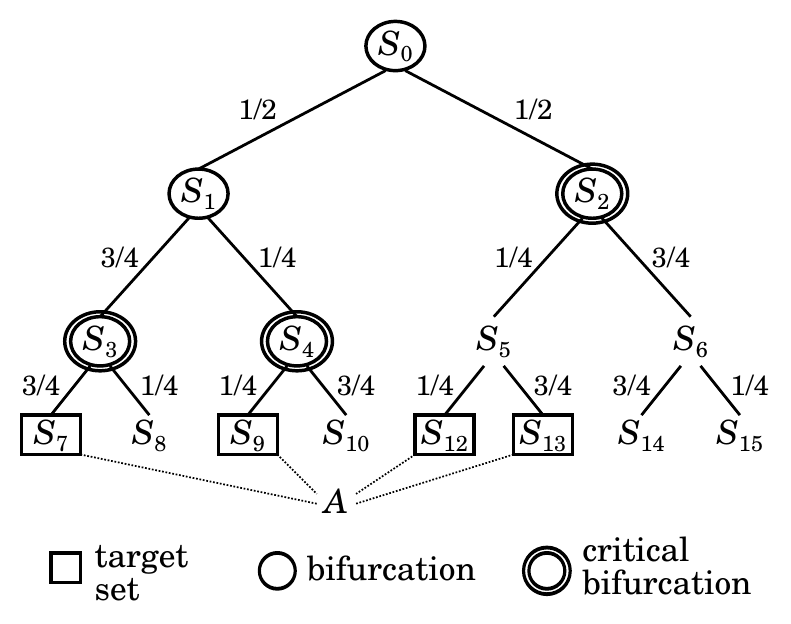}
\caption{$A$-bifurcations.}
\label{fig:bifurcations} %
\end{figure}

\begin{lemma}\label{lem:existence-bif}
Let $\set{R}$ be a set of realisations, and let $A \in \sigma(\set{R})$ be a
member of the generated $\sigma$-algebra of $\set{R}$. Then, the set of
$A$-bifurcations $\lambda(A)$ and the sets of $A$-discriminants $\xi(\lambda)$,
$\lambda \in \lambda(A)$, exist, are countable, and unique.
\end{lemma}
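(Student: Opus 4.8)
The plan is to derive everything from the pairwise result already proved in Lemma~\ref{lem:bif}, together with the Representation Theorem~\ref{thm:representation}. First I would fix convenient witnesses. Since $A\in\sigma(\set{R})$, Theorem~\ref{thm:representation} gives $A\in r(\set{R})$, so $A=\bigcup_n A_n$ for some sequence $(A_n)_{n\in\nats}$ in $\set{R}$; as $A^c\in\sigma(\set{R})$ as well, it likewise has a representation $A^c=\bigcup_m B_m$ with $(B_m)_{m\in\nats}$ in $\set{R}$. The key elementary observation is that a realisation $U\in\set{R}$ occurs in some representation of $A$ if and only if $U\subseteq A$ (just prepend $U$ to $(A_n)$). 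Hence $\lambda(A)$ is precisely the set of bifurcations of pairs of intervals $[\Omega,U]$, $[\Omega,W]$ with $U,W\in\set{R}$, $U\subseteq A$ and $W\subseteq A^c$, and every such pair automatically has disjoint endpoints, so it satisfies the hypotheses of Lemma~\ref{lem:bif}.

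For \emph{existence} and \emph{uniqueness}: barring the degenerate cases $A\in\{\varnothing,\Omega\}$ (for which $\lambda(A)=\varnothing$ and there is nothing to prove), pick any $A_n$ and $B_m$; these are disjoint, so Lemma~\ref{lem:bif} yields a bifurcation $\lambda$, which is an $A$-bifurcation by definition, whence $\lambda(A)\neq\varnothing$. For any $\lambda\in\lambda(A)$, witnessed by endpoints $V_A$ and $V_{A^c}$, Lemma~\ref{lem:bif}(b) supplies a discriminant $\xi$ of $[\Omega,V_A]$ from $[\Omega,V_{A^c}]$; since a bifurcation satisfies $[\Omega,\lambda]=[\Omega,V_A]\cap[\Omega,V_{A^c}]$, one has $[\xi,V_A]=[\Omega,V_A]\setminus[\Omega,\lambda]$, so $\xi\in\xi(\lambda)$ and $\xi(\lambda)\neq\varnothing$. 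Uniqueness is then a matter of unwinding definitions: $\lambda(A)$ is the totality of objects meeting the definition of an $A$-bifurcation, hence is determined by $A$ alone, and each $\xi(\lambda)$ is determined by $A$ and $\lambda$; what Lemma~\ref{lem:bif} contributes is that, once the two witnessing intervals are fixed, the bifurcation and the discriminant are each unique, so ``the'' bifurcation/discriminant of a given pair is unambiguous, and independence from the particular representations used is immediate because the definitions quantify over all of them.

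The substantive part is \emph{countability}, and this is where the realisation axioms are genuinely used. I would prove the sharper statement that $\lambda(A)$ coincides with the set of bifurcations of $[\Omega,A_n]$ and $[\Omega,B_m]$ as $n,m$ range over $\nats$, which is manifestly countable. That this set is contained in $\lambda(A)$ is the existence argument applied to each pair $(A_n,B_m)$. Conversely, take $\lambda\in\lambda(A)$ witnessed by $V_A,V_{A^c}$; by Lemma~\ref{lem:bif} the two discriminants are the unique realisations $\lambda^{+}\supseteq V_A$ and $\lambda^{-}\supseteq V_{A^c}$ lying immediately below $\lambda$ on the branches towards $V_A$ and towards $V_{A^c}$, and axiom~R2 forces $\lambda^{+}\cap\lambda^{-}=\varnothing$. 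Since $\varnothing\neq V_A\subseteq\lambda^{+}\cap A=\lambda^{+}\cap\bigcup_n A_n$, some $A_n$ meets $\lambda^{+}$; by R2 either $A_n$ lies strictly between $\lambda$ and $V_A$ on that branch, or $A_n\supseteq\lambda$, and the latter would force $A_n\supseteq\lambda^{-}$, hence $A_n\cap A^c\neq\varnothing$, which is impossible. Symmetrically, some $B_m$ lies strictly between $\lambda$ and $V_{A^c}$. A short order chase---using that no realisation strictly below $\lambda$ can be an ancestor of both $A_n$ and $B_m$, since it would meet the disjoint sets $\lambda^{+}$ and $\lambda^{-}$---then identifies $\lambda$ as the bifurcation of $[\Omega,A_n]$ and $[\Omega,B_m]$. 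For the discriminant sets one runs the same ``push the witness down to a fixed $A_n$'' manoeuvre, invoking axiom~R4 to replace each relevant half-open branch $(\mu,V]$ by a closed interval $[\xi,V]$, and one finds $\xi(\lambda)$ confined to a countable family indexed by the $A_n$. (Should $\set{R}$ instead be assumed countable from the outset, countability is trivial from $\lambda(A),\xi(\lambda)\subseteq\set{R}$, but the identification above is choice-free and more informative.)

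The step I expect to be the main obstacle is exactly this last identification: showing that an $A$-bifurcation witnessed by arbitrarily coarse endpoints $V_A,V_{A^c}$ is already produced by the fixed countable representations $(A_n)$ and $(B_m)$. The remaining parts are essentially mechanical applications of Lemma~\ref{lem:bif} and Theorem~\ref{thm:representation}.
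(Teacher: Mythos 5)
Your proof is correct and follows essentially the same route as the paper's: fix countable representations of $A$ and $A^c$ (guaranteed by Theorem~\ref{thm:representation}), apply Lemma~\ref{lem:bif} pairwise for existence and uniqueness, and use axiom~R2 to show that every bifurcation and discriminant witnessed by an arbitrary representation already arises from the fixed countable one, which gives countability and well-definedness. Your observation that a realisation occurs in some representation of $A$ exactly when it is a realisation contained in $A$ is a minor streamlining, and your order chase reducing an arbitrary witness to the fixed representation is, if anything, more detailed than the paper's symmetric comparison of two representations.
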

\begin{proof}
Let $\set{A}_1$ and $\set{A}^c_1$ be representations of $A$ and $A^c$
respectively. Consider the set $\lambda_1(A) \subset \set{R}$ of
bifurcations of $[\Omega, V_A]$ and $[\Omega, V_{A^c}]$ generated by all
pairs of endpoints $(V_A, V_{A^c}) \in \set{A} \times \set{A}^c$. Due to the
representation theorem, we know that both $\set{A}$ and
$\set{A}^c$ are countable. Therefore, $\set{A} \times \set{A}^c$
and $\lambda_1(A)$ are countable too. Now, repeat the same procedure to
construct a set $\lambda_2(A)$ of bifurcations from two representations
$\set{A}_2$ and $\set{A}^c_2$ of $A$ and $A^c$ respectively.

Let $R \in \lambda_1(A)$. Then, there exists $R_A \in \set{A}_1$ and
$R_{A^c} \in \set{A}^c_1$ such that $[\Omega, R] = [\Omega, R_A] \cap [\Omega,
R_{A^c}]$. Since $\set{A}_2$ and $\set{A}^c_2$ are representations of $A$ and
$A^c$ respectively, there must be members $S_A \in \set{A}_2$ and $S_{A^c}
\in \set{A}^c_2$ such that $R_A \cap S_A \neq \varnothing$ and $R_{A^c} \cap
S_{A^c} \neq \varnothing$. Due to axiom~R2, it must be that either $R_A \subset
S_A$ or $S_A \subset R_A$; similarly, either $R_{A^c} \subset S_{A^c}$ or
$S_{A^c} \subset R_{A^c}$. But then, $[\Omega, S_A] \cap [\Omega, S_{A^c}] =
[\Omega, R]$, implying that $R \in \lambda_2(A)$. Since $R$ is arbitrary,
$\lambda_1(A) = \lambda_2(A)$. Hence, we have proven that $\lambda(A)$
exists, is countable, and unique.

Let $\lambda \in \lambda(A)$ be an arbitrary $A$-bifurcation. Let
$\set{A}_1$ and $\set{A}_2$ be two representations of $A$. Because
$A$-representations are countable, there exists a countable number of intervals
$[\Omega, V_A]$, $V_A \in \set{A}_1$, containing $\lambda$ and an associated
discriminant. Let $\xi_1(\lambda)$ be the collection of those discriminants.
Following an argument analogous as above, it is easy to see that
$\xi_2(\lambda)$, the set of discriminants constructed from the intervals
associated to the representation $\set{A}$, must be equal to $\xi_1(\lambda)$.
Hence, for each $\lambda \in \lambda(A)$, $\xi(\lambda)$ exists, is countable
and unique.
\end{proof}

We are now ready to define interventions on a causal space. In the next
definition, an $A$-intervention is defined as the change of the causal
measure at the bifurcations such that the desired event $A$ will inevitably take
place. This is done by removing all the probability mass leading to the
undesired event $A^c$ and renormalising thereafter. 

\begin{definition}[Intervention]\label{def:intervention}
Let $\set{R}$ be a set of realisations, $\prob$ be a causal measure, and
$A$ be a member of the generated $\sigma$-algebra of $\set{R}$. A causal measure
$\prob'$ is said to be an $A$-\emph{intervention of} $\prob$ iff for all $U, V
\in \set{R}$ such that $V \cap A \neq \varnothing$,
\begin{equation}\label{eq:intervention}
  \prob'(V|U) \cdot G(U,V) = \prob(V|U),
\end{equation}
where $G(U,V)$ is the \emph{gain} of the interval $[U,V]$ defined by
\begin{equation}\label{eq:gain}
  G(U,V) := \prod_{\lambda \in \Lambda} 
    \sum_{\xi \in \xi(\lambda)} \prob(\xi|\lambda).
\end{equation}
Here, $\Lambda := [U,V] \cap \lambda(A)$ is the set of bifurcations 
in $[U,V]$, and each $\xi(\lambda)$ is the set of $A$-discriminants
of $\lambda \in \Lambda$.
\end{definition}

In the tree visualisation, an $A$-intervention can be thought of as
a reallocation of the probability mass at the $A$-bifurcations 
(see Fig.~\ref{fig:intervention}). Essentially, this is done
by first removing the probability mass from the transitions that do
not have any successor realisation in $A$ and then by renormalising
the remaining transitions, \textit{i.e.}\ the ones rooted at $A$-discriminants.

\begin{figure}[htbp]
\centering %
\includegraphics[width=0.8\columnwidth]{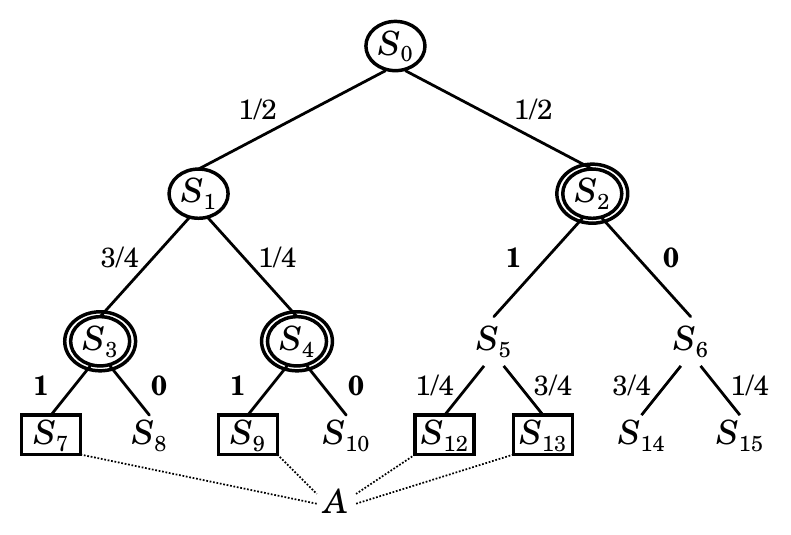}
\caption{$A$-intervention.}
\label{fig:intervention} %
\end{figure}

\begin{theorem}[Uniqueness of $A$-Interventions]\label{thm:uniqueness-int}
Let $\set{R}$ be a set of realisations, $\prob$ be a causal measure, and
$A$ be a member of the generated $\sigma$-algebra of $\set{R}$. The
$A$-intervention is unique if for each bifurcation $\lambda \in \lambda(A)$,
the corresponding $A$-discriminants are not null, \textit{i.e.}\ they are such that
\begin{equation}\label{eq:intervention-condition}
  \sum_{\xi \in \xi(\lambda)} \prob(\xi|\lambda) > 0. 
\end{equation}
\end{theorem}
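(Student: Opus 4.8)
The plan is to show that any two $A$-interventions $\prob_1'$ and $\prob_2'$ of $\prob$ coincide as causal measures, i.e.\ that $\prob_1'(V|U) = \prob_2'(V|U)$ for every ordered pair $U,V \in \set{R}$. By axioms~C1 and~C2 the two measures already agree (on the values $1$ and $0$) whenever $U$ does not precede $V$, so it suffices to treat pairs with $U \supset V$, and I would split these according to whether $V$ meets the target event~$A$.

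Suppose first that $V \cap A \neq \varnothing$. Then~\eqref{eq:intervention} applies to both measures, so $\prob_i'(V|U)\,G(U,V) = \prob(V|U)$ for $i=1,2$. This is exactly where hypothesis~\eqref{eq:intervention-condition} enters: by Lemma~\ref{lem:existence-bif} the index set $\Lambda = [U,V]\cap\lambda(A)$ of~\eqref{eq:gain} is countable, every factor $\sum_{\xi\in\xi(\lambda)}\prob(\xi|\lambda)$ is strictly positive by~\eqref{eq:intervention-condition} (and at most $1$ by the sum rule~C3, since the $A$-discriminants of $\lambda$ are among its immediate successors), hence $G(U,V) \in (0,1]$. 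Dividing, $\prob_i'(V|U) = \prob(V|U)/G(U,V)$, which is independent of $i$; so $\prob_1'$ and $\prob_2'$ agree on every such pair.

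It remains to treat pairs with $V \subseteq A^c$, where~\eqref{eq:intervention} says nothing and one must instead exploit that an $A$-intervention is required to be a causal measure and to alter $\prob$ only at the $A$-bifurcations. I would argue as follows. For $\lambda \in \lambda(A)$, applying the previous paragraph to the one-step intervals $[\lambda,\xi]$ with $\xi$ an immediate successor of $\lambda$ shows (using $G(\lambda,\xi) = \sum_{\xi'\in\xi(\lambda)}\prob(\xi'|\lambda)$ for such intervals) that $\sum_{\xi\in\xi(\lambda)}\prob_i'(\xi|\lambda) = 1$; by the sum rule~C3 every immediate successor of $\lambda$ that is \emph{not} an $A$-discriminant is therefore $\prob_i'$-null. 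Now take $U \supset V$ with $V \subseteq A^c$; the chain $[U,V]$ is totally ordered and, by axiom~R4 and Proposition~\ref{prop:partial-order}, closed under the limits of its monotone subsequences, so one can locate the last $A$-bifurcation $\lambda^\ast$ on it, if any. Its successor on the chain lies in $A^c$ (otherwise it would separate $A$ from $A^c$ further down and $\lambda^\ast$ would not be last), hence is a non-discriminant and so $\prob_i'$-null, and then the product rule~C4 forces $\prob_i'(V|U)=0$. If instead $[U,V]$ contains no $A$-bifurcation, then $G=1$ on every subinterval, the intervention leaves these transitions untouched, and $\prob_i'(V|U)=\prob(V|U)$. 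Either way the value is independent of $i$, and combining with the first case yields $\prob_1'=\prob_2'$.

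The step I expect to be the main obstacle is this last case — the pairs not directly governed by~\eqref{eq:intervention} — together with making the combinatorics of $A$-bifurcations and $A$-discriminants watertight. In particular one must handle an immediate successor $\xi$ of $\lambda$ that is simultaneously an $A$-discriminant of $\lambda$ and itself an $A$-bifurcation, so that $G(\lambda,\xi)$ picks up the extra factor $\sum_{\xi'\in\xi(\xi)}\prob(\xi'|\xi)$; one must justify that ``the last $A$-bifurcation on the chain $[U,V]$'' is well defined, which is precisely the role of axiom~R4 and Lemmas~\ref{lem:bif} and~\ref{lem:existence-bif}; and one must pin down the transitions lying entirely inside $A^c$, leaning on the reading of Definition~\ref{def:intervention} under which an $A$-intervention alters none of the non-critical transition probabilities of $\prob$. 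Condition~\eqref{eq:intervention-condition} itself does its work only in the easy case, where it guarantees that the renormalising denominators never vanish.
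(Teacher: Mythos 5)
Your proof follows the same case decomposition as the paper's own argument --- C1/C2 for non-nested pairs, division by the strictly positive gain $G(U,V)$ when $V \cap A \neq \varnothing$, and a split on whether $[U,V]$ contains an $A$-bifurcation when $V \cap A = \varnothing$ --- so it is essentially the same proof. Your handling of the last case (locating the last bifurcation on the chain, annihilating its non-discriminant successor via C3, and propagating by C4) is in fact more detailed than the paper's one-line appeal to C3, and the two residual issues you flag (a discriminant that is itself a bifurcation, and transitions lying wholly inside $A^c$, which the literal Definition~\ref{def:intervention} does not constrain) are glossed over by the paper's proof as well.
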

\begin{proof}
Let $\prob'$ be an $A$-intervention. Because it is a causal measure,
$\prob'(V|U) = 1$ when $V \supset U$ and $\prob'(V|U) = 0$ when $V \cap U =
\varnothing$. It remains to check that $\prob'(V|U)$ is unique when $V \subset
U$. If $V \cap A \neq \varnothing$, then the definition applies. Here,
$\lambda(A)$ and the $\{\xi(\lambda)\}_{\lambda \in \lambda(A)}$ are unique due
to Lemma~\ref{lem:bif}, thus so are the $\Lambda := [U,V] \cap
\lambda(A)$ for each $U, V \in \set{R}$. Hence, we see that if
condition~\eqref{eq:intervention-condition} holds for all $\lambda \in \lambda(A)$,
then~\eqref{eq:intervention} has a unique solution for $\prob'(V|U)$.
Finally, if $V \cap A = \varnothing$ then $\prob'(V|U)$ depends on
where $[U,V]$ contains an $A$-bifurcation. If it does not, then $\prob'(V|U)
= \prob(V|U)$. If it does, then Axiom~C3 implies $\prob'(V|U) = 0$.
\end{proof}

\begin{corollary}
$A$-interventions are unique up to intervals containing only null discriminants.
In other words, given two $A$-interventions $\prob'_1$ and $\prob'_2$ let
$U,V \in \set{R}$. Then, $\prob'_1(V|U) = \prob'_2(V|U)$ if for all
$\lambda \in [U,V] \cap \lambda(A)$, there exists $\xi \in \xi(\lambda)$ such
that $\prob(\xi|\lambda) > 0$.
\end{corollary}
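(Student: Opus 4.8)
The plan is to re‑run the argument in the proof of Theorem~\ref{thm:uniqueness-int} while keeping track of \emph{which} instances of the positivity condition~\eqref{eq:intervention-condition} it actually uses, and to observe that for a fixed pair $U,V\in\set{R}$ only the bifurcations $\lambda\in[U,V]\cap\lambda(A)$ ever enter — and those are exactly the ones the corollary's hypothesis controls. So I would fix two $A$-interventions $\prob'_1,\prob'_2$ together with a pair $U,V$ satisfying the hypothesis, and show $\prob'_1(V|U)=\prob'_2(V|U)$ case by case.

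First I would clear the trivial cases: since each $\prob'_i$ is a causal measure, axioms~C1--C2 already force $\prob'_i(V|U)=1$ when $V$ precedes $U$ and $\prob'_i(V|U)=0$ when $U$ and $V$ are incomparable, so both interventions agree there. It then remains to treat $V$ strictly following $U$, where I would split on whether $V$ meets $A$. If $V\cap A\neq\varnothing$, Definition~\ref{def:intervention} gives $\prob'_i(V|U)\cdot G(U,V)=\prob(V|U)$ for $i=1,2$ with the \emph{same} gain $G(U,V)=\prod_{\lambda\in\Lambda}\sum_{\xi\in\xi(\lambda)}\prob(\xi|\lambda)$, $\Lambda=[U,V]\cap\lambda(A)$, because $G$ is built only from $\prob$ and from $\lambda(A),\xi(\lambda)$, which are unique by Lemmas~\ref{lem:bif}--\ref{lem:existence-bif}. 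The hypothesis makes each factor with $\lambda\in\Lambda$ strictly positive (a countable sum of nonnegative terms with a positive summand), whence $G(U,V)>0$ and $\prob'_i(V|U)=\prob(V|U)/G(U,V)$ is the same for both $i$.

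If instead $V\cap A=\varnothing$, I would reproduce the corresponding part of the proof of Theorem~\ref{thm:uniqueness-int}: when $[U,V]$ contains no $A$-bifurcation nothing is reallocated along the chain from $U$ to $V$, so $\prob'_i(V|U)=\prob(V|U)$ by the product rule~C4; when $[U,V]$ does contain an $A$-bifurcation $\lambda$, the intervention fixes each $\prob'_i(\xi|\lambda)$ for $\xi\in\xi(\lambda)$, and — using that $\sum_{\xi\in\xi(\lambda)}\prob(\xi|\lambda)>0$ for this $\lambda\in[U,V]$ — the sum rule~C3 kills the transition out of $\lambda$ towards $V$, so C4 forces $\prob'_i(V|U)=0$ for both $i$. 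In every case $\prob'_1(V|U)=\prob'_2(V|U)$, which is the assertion.

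The hard part is this last case: I must verify that forcing the discriminant masses out of $\lambda$ to renormalise consumes positivity \emph{only} at that $\lambda$ (and at any further $A$-bifurcations sitting on the one‑step intervals $[\lambda,\xi]$, which are again covered once one notes $\lambda\in[U,V]$), rather than demanding it globally as the theorem does — this localisation is the whole content of the corollary. A secondary point needing a line of care, if one wants full rigour for countably infinite chains, is that the product defining $G(U,V)$ does not degenerate to $0$; here axiom~R4 together with the sum rule limits how many strictly‑below‑one factors can pile up along a single branch. Everything else is a routine specialisation of the proof of Theorem~\ref{thm:uniqueness-int}.
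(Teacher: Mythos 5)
Your proof is correct and matches the paper's intent exactly: the paper states this corollary without a separate proof, as an immediate localisation of Theorem~\ref{thm:uniqueness-int}, and your re-run of that theorem's argument---observing that only the factors $\sum_{\xi\in\xi(\lambda)}\prob(\xi|\lambda)$ for $\lambda\in[U,V]\cap\lambda(A)$ enter the determination of $\prob'(V|U)$, in each of the cases $V\cap A\neq\varnothing$ and $V\cap A=\varnothing$---is precisely that localisation. The two caveats you flag (picking the correct last bifurcation in the $V\cap A=\varnothing$ case, and the non-degeneracy of a possibly infinite product in $G(U,V)$) are glossed over by the paper's own proof of the theorem as well, so your argument is at least as rigorous as the source.
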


Finally, the next proposition shows that the intervention is indeed correct in
the sense that the desired event occurs with certainty after the intervention.

\begin{proposition}\label{prop:certainty-int}
Let $\set{R}$ be a set of realisations and let $A$ be a member of the generated
$\sigma$-algebra $\Sigma$ of $\set{R}$. Furthermore, let $\prob'$ be
probability measure compatible with an $A$-intervention of a causal measure
$\prob$. Then, $\prob'(A|\Omega) = 1$.
\end{proposition}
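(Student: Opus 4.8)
The plan is to show that after an $A$-intervention, the total probability mass reaching $A$ from the root $\Omega$ equals $1$, by combining the defining equation~\eqref{eq:intervention} with the sum-rule~C3 and a telescoping/product argument along the tree. First I would fix a representation $(V_n)_{n\in\nats}$ of $A$ consisting of realisations in $\set{R}$; by Theorem~\ref{thm:representation} such a representation exists, and without loss of generality (refining if necessary using Axiom~R2) I may take the $V_n$ pairwise disjoint, so that $\prob'(A|\Omega) = \sum_n \prob'(V_n|\Omega)$ by the sum-rule applied to $\prob'$ on the event $A$ and its complement. The key computation is then to evaluate each $\prob'(V_n|\Omega)$. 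Using the intervention equation, $\prob'(V_n|\Omega) = \prob(V_n|\Omega)/G(\Omega,V_n)$, where the gain $G(\Omega,V_n)$ is the product over the $A$-bifurcations $\lambda$ lying on the path $[\Omega,V_n]$ of the normalising factors $\sum_{\xi\in\xi(\lambda)}\prob(\xi|\lambda)$.

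Next I would unfold $\prob(V_n|\Omega)$ along the path $[\Omega,V_n]$ using the product-rule~C4: writing the realisations on the path in causal order $\Omega = W_0 \supset W_1 \supset \cdots \supset W_k = V_n$, one has $\prob(V_n|\Omega) = \prod_{j} \prob(W_{j+1}|W_j)$. The crucial observation is that each $A$-bifurcation $\lambda$ on this path contributes exactly one transition $\prob(\xi|\lambda)$ with $\xi\in\xi(\lambda)$ the corresponding $A$-discriminant on the path toward $V_n$ (this is the content of the definition of $A$-discriminant, $[\xi,V_A] = [\Omega,V_A]\setminus[\Omega,\lambda]$, together with uniqueness from Lemma~\ref{lem:existence-bif}), while all other transitions on the path occur at non-bifurcation realisations. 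So in the ratio $\prob(V_n|\Omega)/G(\Omega,V_n)$, the factor $\prob(\xi|\lambda)$ in the numerator pairs against the factor $\sum_{\xi'\in\xi(\lambda)}\prob(\xi'|\lambda)$ in the denominator at each bifurcation $\lambda$, yielding exactly the \emph{conditional} branch probability $\prob(\xi|\lambda)/\sum_{\xi'}\prob(\xi'|\lambda)$ — i.e.\ the renormalised transition probability — while non-bifurcation transitions are left untouched. Thus $\prob'(V_n|\Omega)$ is precisely the product of transition probabilities along $[\Omega,V_n]$ in the intervened tree, as one expects.

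Finally, I would sum over $n$. The reallocation at each $A$-bifurcation $\lambda$ redistributes all of the mass that was leaving $\lambda$ toward $A$ (via the discriminants $\xi(\lambda)$) so that it sums to $1$ across those discriminants, and kills the mass of any transition at $\lambda$ leading only into $A^c$; at every non-bifurcation realisation the branch probabilities are unchanged and still sum to $1$ over the relevant successors by~C3. Propagating these local "sums to one" statements down the tree — formally, by an induction on the realisations in $\set{R}$, or by Axiom~R4 to handle limit realisations — gives $\sum_n \prob'(V_n|\Omega) = 1$, i.e.\ $\prob'(A|\Omega)=1$. The main obstacle I anticipate is bookkeeping at the bifurcations: one must verify carefully that a path $[\Omega,V_n]$ to a realisation in the chosen representation of $A$ passes through each of its $A$-bifurcations exactly once and exits each via a genuine $A$-discriminant (never via an $A^c$-only transition), and that the representation can be taken disjoint so that the sum-rule applies cleanly; the countability and uniqueness furnished by Lemma~\ref{lem:existence-bif} and Axiom~R4 are what make this rigorous, and the non-null condition~\eqref{eq:intervention-condition} is implicitly needed so that the gains are nonzero and the division is legitimate.
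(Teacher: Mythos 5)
Your strategy is sound in outline but genuinely different from---and considerably heavier than---the proof in the paper. You attack $\prob'(A|\Omega)$ head-on: decompose $A$ into a disjoint representation $(V_n)$, compute each $\prob'(V_n|\Omega)=\prob(V_n|\Omega)/G(\Omega,V_n)$ from \eqref{eq:intervention}, unfold the numerator with the product-rule C4 so that each gain factor $\sum_{\xi\in\xi(\lambda)}\prob(\xi|\lambda)$ renormalises the one transition out of $\lambda$ taken by the path, and then re-sum over all paths. The paper instead works with the complement: take a representation $\set{A}^c$ of $A^c$; every $V\in\set{A}^c$ satisfies $V\cap A=\varnothing$, and $[\Omega,V]$ necessarily contains an $A$-bifurcation (pair $V$ with any endpoint of a representation of $A$ and invoke Lemma~\ref{lem:bif}), so the intervened measure kills the branch and $\prob'(V|\Omega)=0$; hence $\prob'(A^c|\Omega)=\sum_V \prob'(V|\Omega)=0$ and $\prob'(A|\Omega)=1-\prob'(A^c|\Omega)=1$ because $\prob'$ is a probability measure. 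What the complement trick buys is exactly the step you flag as your main obstacle: summing a family of renormalised path-products to $1$ over a possibly infinitely branching and infinitely deep tree needs a limiting argument at the limit realisations of Axiom~R4, plus the non-nullity condition \eqref{eq:intervention-condition} so that the divisions are legitimate; in the paper's version every summand is individually zero, so no convergence question arises, and the normalisation of $\prob'$---a hypothesis you use only for additivity---does the remaining work in one line. Your computation is still worth having, since it makes explicit that the intervened measure is the ``renormalised tree'' one expects, but to prove the proposition at its stated generality the complement route is the economical one.
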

\begin{proof}
Let $\set{A}$ and $\set{A}^c$ be representations of $A$ and $A^c$ in $\set{R}$
respectively. Then, each $V \in \set{A}^c$ is such that $V \cap A =
\varnothing$. Hence, due to (I1),
\[
  \prob'(A^c|\Omega) = \sum_{V \in \set{A}^c} \prob'(V|\Omega) = 0.
\]
But then, since $\prob'$ is a probability measure,
\[
  \prob'(A|\Omega) = 1 - \prob'(A^c|\Omega) = 1.
\]
\end{proof}

A closer look at the definition of an $A$-intervention reveals that, while the
set of bifurcations $\lambda(A)$ contains all the logically required
bifurcations (i.e.\ all moments having a branch leading to the undesired
event $A^c$), some of them remain unaltered after the intervention. In particular,
this is always the case when the mechanisms assign zero probability to the branches
leading to~$A^c$.

\begin{definition}[Critical Bifurcations]
Let $\set{R}$ be a set of realisations, $\prob$ be a causal measure, and
$A$ be a member of the generated $\sigma$-algebra of $\set{R}$. A
bifurcation $\lambda \in \lambda(A)$ is said to be \emph{critical} iff the
corresponding $A$-discriminants are not complete, \textit{i.e.}\ they are such that
\begin{equation}\label{eq:critical-condition}
  \sum_{\xi \in \xi(\lambda)} \prob(\xi|\lambda) < 1.
\end{equation}
\end{definition}

\begin{proposition}\label{prop:critical-gain}
The gain~\eqref{eq:gain} is equal to
\begin{equation}\label{eq:critical-gain}
  G(U,V) =
  \prod_{\lambda \in \Gamma}
        \sum_{\xi \in \xi(\lambda)} \prob(\xi|\lambda)
\end{equation}
where $\Gamma$ is the set of critical bifurcations in the
interval from $U$ to $V$, and each $\xi(\lambda)$ is the set of $A$-discriminants
of $\lambda \in \Lambda$.
\end{proposition}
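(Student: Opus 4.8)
The plan is to show that every factor in the product \eqref{eq:gain} indexed by a \emph{non}-critical bifurcation equals $1$, so that deleting those factors leaves the product untouched, which is precisely \eqref{eq:critical-gain}.

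First I would record the inclusion $\Gamma \subseteq \Lambda$. By definition $\Gamma$ is the set of critical bifurcations lying in $[U,V]$, and a critical bifurcation is in particular an $A$-bifurcation, so $\Gamma = \Lambda \cap \{\lambda : \lambda \text{ critical}\}$. It therefore suffices to prove that for every $\lambda \in \Lambda \setminus \Gamma$ — i.e.\ every $A$-bifurcation in $[U,V]$ that is not critical — one has $\sum_{\xi \in \xi(\lambda)} \prob(\xi|\lambda) = 1$.

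Next I would pin this sum to $1$ by two inequalities. The ``$\ge 1$'' direction is immediate: negating the defining condition \eqref{eq:critical-condition} of criticality, ``$\lambda$ not critical'' means exactly $\sum_{\xi \in \xi(\lambda)} \prob(\xi|\lambda) \ge 1$. For ``$\le 1$'' I would use the tree structure together with the sum rule~C3. The key observation is that each $A$-discriminant $\xi \in \xi(\lambda)$ is an \emph{immediate successor} of $\lambda$ in $\set{R}$: by Lemma~\ref{lem:bif} and the definition of an $A$-discriminant, the identity $[\xi, V_A] = [\Omega, V_A] \setminus [\Omega, \lambda]$ forces $\lambda \supsetneq \xi$ with no realisation strictly in between. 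Hence distinct members of $\xi(\lambda)$ are incomparable, so disjoint by axiom~R2, and each follows $\lambda$. Extending this (countable, by Lemma~\ref{lem:existence-bif}) disjoint family to the full family of immediate successors of $\lambda$ — which by axioms R2--R4 is a countable disjoint family with union $\lambda$ — axiom~C3 gives that $\prob(\cdot|\lambda)$ summed over \emph{all} immediate successors equals $1$; since every summand is nonnegative, the partial sum over $\xi(\lambda)$ is at most $1$. Combining the two bounds yields $\sum_{\xi \in \xi(\lambda)} \prob(\xi|\lambda) = 1$. (One may instead invoke a probability measure $\prob'$ compatible with $\prob$ and write $\sum_{\xi \in \xi(\lambda)} \prob(\xi|\lambda) = \prob'\bigl(\bigcup_{\xi} \xi \,\big|\, \lambda\bigr) \le 1$, as is done implicitly in Proposition~\ref{prop:certainty-int}.)

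Finally I would conclude by splitting the product:
\[
  G(U,V)
  = \prod_{\lambda \in \Lambda} \sum_{\xi \in \xi(\lambda)} \prob(\xi|\lambda)
  = \Bigl( \prod_{\lambda \in \Gamma} \sum_{\xi \in \xi(\lambda)} \prob(\xi|\lambda) \Bigr)
    \cdot \Bigl( \prod_{\lambda \in \Lambda \setminus \Gamma} 1 \Bigr)
  = \prod_{\lambda \in \Gamma} \sum_{\xi \in \xi(\lambda)} \prob(\xi|\lambda),
\]
which is \eqref{eq:critical-gain}; when $\Lambda$ is infinite this manipulation is still legitimate since the deleted factors are all exactly $1$. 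The step I expect to be the main obstacle is the ``$\le 1$'' bound: making rigorous the claim that the $A$-discriminants of $\lambda$ sit inside a disjoint family of immediate successors of $\lambda$ that exhausts $\lambda$, so that C3 applies — this is where the completeness axioms R3--R4 (and closure under monotone limits) must be invoked carefully, whereas the remainder is bookkeeping.
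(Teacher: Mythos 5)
Your proposal is correct and follows essentially the same route as the paper: partition $\Lambda$ into the critical bifurcations $\Gamma$ and their complement, and observe that every non-critical factor in the product equals $1$. The only difference is that you make explicit the upper bound $\sum_{\xi \in \xi(\lambda)} \prob(\xi|\lambda) \le 1$ (via axiom~C3 applied to the disjoint family of immediate successors of $\lambda$), a step the paper's one-line computation leaves implicit.
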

\begin{proof}
Partition $\Lambda$ into $\Gamma$ and 
$\overline{\Gamma} = \Lambda\setminus \Gamma$,
where $\Gamma$ contains only the critical $A$-bifurcations. Then,  
\begin{align*}
  \prod_{\lambda \in \Lambda}
    \sum_{\xi \in \xi(\lambda)} &\prob(\xi|\lambda)
  = \biggl\{ \prod_{\lambda \in \overline{\Gamma}}
    \sum_{\xi \in \xi(\lambda)} \prob(\xi|\lambda) \biggr\}
    \cdot \biggl\{ \prod_{\lambda \in \Gamma}
    \sum_{\xi \in \xi(\lambda)} \prob(\xi|\lambda) \biggr\}
  \\&= 1 \cdot
    \biggl\{ \prod_{\lambda \in \Gamma}
    \sum_{\xi \in \xi(\lambda)} \prob(\xi|\lambda) \biggr\}
  = \prod_{\lambda \in \Gamma}
    \sum_{\xi \in \xi(\lambda)} \prob(\xi|\lambda).
\end{align*}
\end{proof}

\subsection{Random Variables}

We recall the formal definition of a random variable. Let $(\Omega, \Sigma,
\prob)$ be a probability space and $(S, \Xi)$ be a measurable space.
Then an $(S,\Xi)$-valued random variable is a function $X : \Omega
\rightarrow S$ which is $(\Sigma, \Xi)$-measurable, \textit{i.e.}\ for every member $B
\in \Xi$, its preimage $X^{-1}(B) \in \Sigma$ where $X^{-1}(B) = \{\omega:
X(\omega) \in B\}$. If we have a collection $(X_\gamma: \gamma \in \Gamma)$ of
mappings $X_\gamma:\Omega \rightarrow S_\gamma$, then
\[
  \Sigma := \sigma( X_\gamma: \gamma \in \Gamma )
\]
is the smallest $\sigma$-algebra $\Sigma$ such that each $X_\gamma$ is
$\Sigma$-measurable.

The lesson we have learnt so far is that it is not enough to just specify the
$\sigma$-algebra of a random experiment in order to understand the effect of
interventions; rather, we need to endow the $\sigma$-algebra with a causal
structure. While one would expect the same to hold for random variables one
wishes to intervene, we will see that it is not necessary to explicitly model
causal dependencies among them. Instead, it is sufficient to establish a
link to some abstract causal space that is shared by all the random variables.
We begin this investigation thus with a definition of a function having causal
structure.

\begin{definition}[Realisable Function]
Let $\Omega$, $S$ be sets and $\set{R}$ and $\set{S}$ be sets of realisations
over $\Omega$ and $S$ respectively. A function $X: \Omega \rightarrow S$  is
said to be $(\set{R}, \set{S})$-realisable iff for every $B \in \set{S}$, the
preimage $X^{-1}(B)$ is a member of $\set{R}$. The following picture
illustrates this:
\[\xymatrix{
  \Omega \ar[r]^{X} & S \\
  \set{R} & \set{S} \ar[l]_{X^{-1}}
}\]
\end{definition}

The next proposition shows that realisable functions are measurable
functions, but not vice versa---as intuition immediately predicts.  

\begin{proposition}\label{prop:realisable-measurable}
Let $\set{R}$ and $\set{S}$ be two sets of realisations over sets $\Omega$ and
$S$ respectively. Let $\Sigma = \sigma(\set{R})$ and $\Xi = \sigma(\set{S})$.
If a mapping $X : \Omega \rightarrow S$ is $(\set{R}, \set{S})$-realisable then
it is also $(\Sigma, \Xi)$-measurable. However, the converse is not necessarily
true. In a diagram:
\[
\xymatrix{
  \Omega \ar[r]^{X} & S \ar @{} [dr] |{\Longrightarrow}
    & \Omega \ar[r]^{X} & S \\
  \set{R} & \set{S} \ar[l]^{X^{-1}}
    & \Sigma & \Xi \ar[l]^{X^{-1}}
}
\]
\end{proposition}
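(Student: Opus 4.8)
The plan is to establish the forward implication directly and then exhibit a counterexample for the converse. For the forward direction, suppose $X$ is $(\set{R}, \set{S})$-realisable. To show $X$ is $(\Sigma, \Xi)$-measurable, I need to verify that $X^{-1}(B) \in \Sigma$ for every $B \in \Xi = \sigma(\set{S})$. The key observation is that preimage commutes with countable set operations: $X^{-1}(\bigcup_n B_n) = \bigcup_n X^{-1}(B_n)$ and $X^{-1}(B^c) = (X^{-1}(B))^c = \Omega \setminus X^{-1}(B)$. So first I would note that by realisability, $X^{-1}(\set{S}) \subseteq \set{R} \subseteq \sigma(\set{R}) = \Sigma$. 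Then the collection $\set{D} := \{ B \subseteq S : X^{-1}(B) \in \Sigma \}$ contains $\set{S}$ and is closed under complements and countable unions (because $\Sigma$ is, and preimage commutes with these operations); hence $\set{D}$ is a $\sigma$-algebra containing $\set{S}$, so it contains $\sigma(\set{S}) = \Xi$. This yields $X^{-1}(B) \in \Sigma$ for all $B \in \Xi$, which is exactly $(\Sigma, \Xi)$-measurability.

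For the converse, I would construct a small explicit example. Take $\Omega = S = \{1,2,3,4\}$. Let $\set{S}$ be a genuine set of realisations on $S$ with a nontrivial nested structure, e.g.\ $\set{S} = \{S, \{1,2\}, \{3,4\}, \{1\}, \{2\}, \{3\}, \{4\}\}$ (one can check axioms R1--R4 hold, the limit conditions being vacuous in the finite case). Similarly let $\set{R} = \{\Omega, \{1,3\}, \{2,4\}, \{1\}, \{2\}, \{3\}, \{4\}\}$. Now pick $X$ to be a permutation, say $X(1)=1, X(2)=3, X(3)=2, X(4)=4$, so that $X^{-1}(\{1,2\}) = \{1,3\} \in \set{R}$ but $X^{-1}(\{3,4\}) = \{2,4\} \in \set{R}$ — hmm, this particular $X$ is actually realisable, so I would instead choose $X$ so that some preimage of a member of $\set{S}$ is \emph{not} in $\set{R}$ while still being measurable. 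Since $\sigma(\set{R}) = \sigma(\set{S}) = \power(\{1,2,3,4\})$ in this setup, \emph{every} function $X: \Omega \to S$ is $(\Sigma,\Xi)$-measurable; so I just need one $X$ whose preimage of some $\set{S}$-member escapes $\set{R}$. Taking $X$ to be the identity does the job: $X^{-1}(\{1,2\}) = \{1,2\} \notin \set{R}$, so the identity map is $(\Sigma,\Xi)$-measurable but not $(\set{R}, \set{S})$-realisable. (Alternatively, one may simply invoke Fig.~\ref{fig:realisation-set} and a concrete non-realisable coordinate-type map on the running urn example.)

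The main obstacle, such as it is, is purely one of bookkeeping on the converse: one must pick $\set{R}$, $\set{S}$, and $X$ so that (i) $\set{R}$ and $\set{S}$ really do satisfy axioms R1--R4, (ii) $\sigma(\set{R})$ and $\sigma(\set{S})$ are comparable enough that measurability is automatic or easy, and (iii) some $\set{S}$-preimage genuinely fails to land in $\set{R}$. The forward direction is the standard ``good sets'' / $\pi$-$\lambda$-style argument and presents no difficulty, since the only structural fact used is $\set{S} \subseteq \sigma(\set{S})$ together with the definitional inclusion $X^{-1}(\set{S}) \subseteq \set{R} \subseteq \sigma(\set{R})$, and no property of $\set{R}$ or $\set{S}$ beyond being generating families of their respective $\sigma$-algebras is needed.
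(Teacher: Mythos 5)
Your proof is correct, but the forward direction takes a genuinely different route from the paper's. You use the standard ``good sets'' argument: the collection $\set{D} = \{B \subseteq S : X^{-1}(B) \in \Sigma\}$ is a $\sigma$-algebra containing $\set{S}$, hence contains $\Xi = \sigma(\set{S})$. This is more general---it uses nothing about realisation sets beyond the fact that $\set{S}$ generates $\Xi$, and would work for any generating family. The paper instead leans on its Representation Theorem (Theorem~\ref{thm:representation}): every $B \in \Xi$ is a \emph{countable union} of members of $\set{S}$, so $X^{-1}(B)$ is directly exhibited as a countable union of members of $\set{R}$ and hence lies in $\Sigma$, with no need for the good-sets machinery. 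The paper's route is shorter given the theorem already proved, and it additionally shows the stronger fact that $X^{-1}(B)$ is always \emph{representable} in $\set{R}$; your route is self-contained and does not depend on the (nontrivial) complement argument inside Theorem~\ref{thm:representation}. For the converse, your counterexample is structurally the same idea as the paper's (two realisation trees whose generated $\sigma$-algebras are large but whose nesting structures are incompatible under preimage), just with different concrete sets; I checked that your $\set{R}$ and $\set{S}$ satisfy R1--R4 and that the identity map is $(\Sigma,\Xi)$-measurable (since $\Sigma = \power(\Omega)$) yet fails realisability because $\{1,2\} \notin \set{R}$. The only presentational blemish is the false start with the permutation before settling on the identity; in a final write-up you would simply state the identity map from the outset.
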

\begin{proof}
Let $B \in \Xi$. Then, Theorem~\ref{thm:representation}
tells us that there exists a representation $\set{B}$ of $B$ in $\set{S}$. Since
$X$ is $(\set{R}, \set{S})$-realisable, every member $V \in \set{B}$ has a
preimage that is in $\set{R}$, \textit{i.e.}\ $X^{-1}(V) \in \set{R}$. But
$\bigcup_{V \in \set{B}} X^{-1}(V) = X^{-1}(B)$ and $\bigcup_{V \in \set{B}}
X^{-1}(V) \in \Sigma$, hence $X$ is $(\Sigma, \Xi)$-measurable.

To disprove the converse, consider the following counterexample. Take
$\Omega = \{ \omega_1, \omega_2, \omega_3, \omega_4\}$ and $S = \{s_1,
s_2\}$. Let $\set{R} = \{ R_1, \ldots R_8 \}$, where: $R_1 = \Omega$; $R_2 =
\{\omega_1, \omega_2\}$; $R_3 = \{\omega_3, \omega_4\}$; $R_4 = \{\omega_1\}$;
$R_5 = \{\omega_2\}$; $R_6 = \{\omega_3\}$; $R_7 = \{\omega_4\}$; and $R_8 =
\varnothing$. Furthermore, let $\set{S} = \{ S_1, S_2, S_3, S_4 \}$, where:
$S_1 = S$; $S_2 = \{s_1\}$; $S_3 = \{s_2\}$; and $S_4 = \varnothing$. Let $X$
be such that $X(R_2) = S_2$ and $X(R_6) = S_3$. Observe that $S_1 = S_2
\cup S_3 \in \Xi$ and $S_4 = S_1^c \in \Xi$, and in particular $S_1, S_4 \in
\set{S}$. However, $X^{-1}(S_1) = R_2 \cup R_6$, which is obviously in
$\Sigma$ but not in $\set{R}$.
\end{proof}

Next, realisable random variables are simply defined as realisable functions
endowed with a causal measure.

\begin{definition}[Realisable Random Variable] Let $(\Omega, \set{R}, \prob)$ be
a causal space. A \emph{realisable random variable} $X$ is an $(S, \Xi)$-valued
function that is $(\set{R},\set{S})$-realisable, where $\Xi = \sigma(\set{S})$.
\end{definition}

Finally, we define the intervention of a random variable. Let $B$
be a measurable event in $\Xi$, the $\sigma$-algebra in the range
of $X$. Then, a $B$-intervention of $X$ is done in two steps: first, $B$ is
translated into its corresponding event $A$ living in the abstract
$\sigma$-algebra $\Sigma$; second, the resulting event $A$ is intervened.

\begin{definition}[Intervention of a Random Variable]
Let $(\Omega, \set{R}, \prob)$ be a causal space and let $X$ be a $(\set{R},
\set{S})$-realisable random variable. Given a set $B \in \Xi =
\sigma(\set{S})$ of the generated $\sigma$-algebra, a \emph{$B$-intervention of
the realisable random variable $X$} is a $X^{-1}(B)$-intervention of the
causal measure $\prob$.
\end{definition}

\begin{corollary}
$B$-interventions of a realisable random variable are unique up to intervals
containing only null discriminants.
\end{corollary}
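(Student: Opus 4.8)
The plan is to reduce the statement directly to the Corollary on uniqueness of $A$-interventions proved immediately after Theorem~\ref{thm:uniqueness-int}, using the defining equation that a $B$-intervention of a realisable random variable $X$ is, by construction, an $X^{-1}(B)$-intervention of the underlying causal measure $\prob$.

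First I would fix an $(\set{R},\set{S})$-realisable random variable $X$ and a set $B \in \Xi = \sigma(\set{S})$, and put $A := X^{-1}(B)$. The one point that genuinely needs checking is that $A$ is an admissible target for an intervention of $\prob$, i.e.\ that $A \in \sigma(\set{R})$; this is exactly Proposition~\ref{prop:realisable-measurable}, which tells us that $(\set{R},\set{S})$-realisability of $X$ implies $(\Sigma,\Xi)$-measurability, so the preimage of $B \in \Xi$ lies in $\Sigma = \sigma(\set{R})$. With $A$ so identified, the Definition of the intervention of a random variable says verbatim that every $B$-intervention of $X$ is an $A$-intervention of $\prob$, and conversely; hence any two $B$-interventions $\prob'_1, \prob'_2$ of $X$ are two $A$-interventions of $\prob$.

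Next I would invoke the Corollary to Theorem~\ref{thm:uniqueness-int}: for any two $A$-interventions $\prob'_1, \prob'_2$ of $\prob$ and any $U, V \in \set{R}$, one has $\prob'_1(V|U) = \prob'_2(V|U)$ whenever every bifurcation $\lambda \in [U,V] \cap \lambda(A)$ admits at least one non-null $A$-discriminant, i.e.\ whenever the interval $[U,V]$ contains no bifurcation all of whose $A$-discriminants are $\prob$-null. Reading the $A$-bifurcations and $A$-discriminants back through $A = X^{-1}(B)$ yields precisely the qualification ``up to intervals containing only null discriminants'' in the statement, and the conclusion follows.

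I expect no hard step here: the only thing requiring care is the bookkeeping that connects the hypotheses of the earlier corollary (phrased for an abstract event $A \in \sigma(\set{R})$) with the random-variable setting, and in particular confirming via Proposition~\ref{prop:realisable-measurable} that $X^{-1}(B)$ is a legitimate argument of Definition~\ref{def:intervention}. Once that translation is in place, the statement is an immediate specialisation of the abstract uniqueness corollary, so the write-up should be only a few lines.
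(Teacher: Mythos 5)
Your proposal is correct and follows exactly the route the paper intends: the definition of a $B$-intervention of $X$ already identifies it verbatim with an $X^{-1}(B)$-intervention of $\prob$, so the result is an immediate specialisation of the corollary to Theorem~\ref{thm:uniqueness-int} (the paper leaves this unstated, treating the corollary as self-evident). Your extra check via Proposition~\ref{prop:realisable-measurable} that $X^{-1}(B) \in \sigma(\set{R})$, so that Definition~\ref{def:intervention} applies, is a small but worthwhile addition that the paper omits.
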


This concludes our abstract model of causality. It is immediately seen that a
causal stochastic process can be characterised as a collection $(X_\gamma :
\gamma \in \Gamma)$ of $(\set{R}, \set{S}_\gamma)$-realisable random variables
$X:\Omega \rightarrow S_\gamma$ respectively defined over a shared causal space
$(\Omega, \set{R}, \prob)$. This is in perfect accordance with the theory so
far developed. 

For instance, consider a collection of four binary random variables 
$X, Y, Z$, and $U$ accommodated within the causal space $C=(\Omega, \set{R}, \prob)$
from the example shown in Fig.~\ref{fig:causal-space}. For these random variables
to be realisable, they must map elements from the sample space $\Omega$ into
$S = \{0,1\}$ such that the partition induced contains only members in the set
of realisations $\set{R}$. This is achieved by ensuring that each path from
the root to a leave contains \emph{exactly one} value assignment for each random
variable (technically, a \textit{cut} through the tree) as exemplified in 
Fig.~\ref{fig:example}. For instance, all the realisation paths must necessarily
pass either through $S_1$ or $S_2$ where $X$ assumes the value $X=0$ or $X=1$ 
respectively.

\begin{figure}[htbp]
\centering %
\includegraphics[width=0.9\columnwidth]{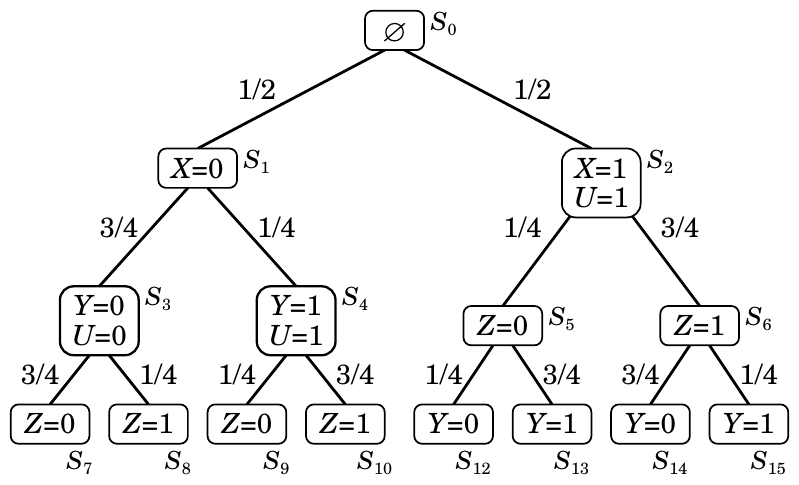}
\caption{A causal stochastic process.}
\label{fig:example} %
\end{figure}

The causal measure over $\set{R}$ extends to the random variables in the obvious
way; thus here, we have
\begin{align*}
  \prob(X=0&) = \frac{1}{2}, & \prob(X=1&) = \frac{1}{2}, \\
  \prob(Y=0&|X=0) = \frac{3}{4}, & \prob(Y=1&|X=0) = \frac{1}{4}, \\
  \prob(Z=0&|X=1) = \frac{1}{4}, & \prob(Z=1&|X=1) = \frac{3}{4}, \\
  \vdots
\end{align*}
and so on.

The example also illustrates the possibility of modelling dynamically
instantiated causal dependencies. This is seen by noting that~$Y$ 
precedes~$Z$ if $X=0$ but~$Y$ succeeds~$Z$ if $X=1$, \textit{i.e.}\ $X$
is a second-order causal dependency that controls whether $Y$ causes $Z$ 
or \textit{vice versa}. Obviously, the same idea can be applied to model
higher-order causal dependencies. 

\begin{figure}[htbp]
\centering %
\includegraphics[width=0.9\columnwidth]{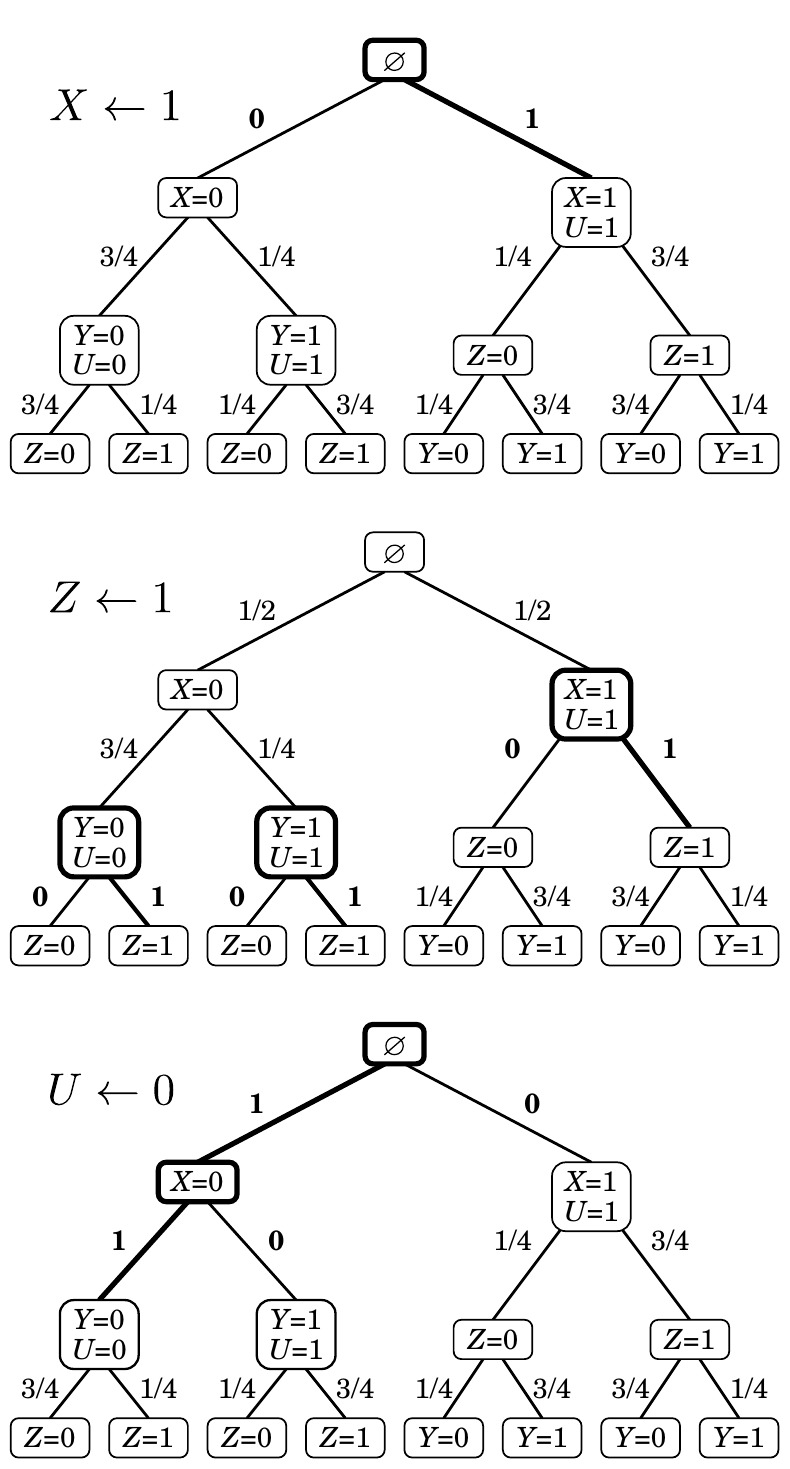}
\caption{Three example interventions.}
\label{fig:example-interventions} %
\end{figure}

Fig.~\ref{fig:example-interventions} shows three interventions, namely 
$X \leftarrow 1$, $Z \leftarrow 1$, and $U \leftarrow 0$, where the 
critical bifurcations are highlighted with thicker outlines. As mentioned
before, the intervention $X \leftarrow 1$ picks the direction of the causal
dependency of the pair $(Y, Z)$ by setting $Z$ to be the cause and $Y$ its effect.
If instead the subject is interested in finding out how to bring about 
$Z = 1$ herself, then she can do so by inspecting the intervention
$Z \leftarrow 1$. Enumerating the critical bifurcations, she can conclude 
that there are three mutually exclusive circumstances where this
can ocurr, following two different causal narratives. Finally, she
decides to pick the leftmost one through the intervention $U \leftarrow 1$
which combines changes at two critical bifurcations.

\bibliographystyle{model2-names}
\bibliography{bibliography}

\end{document}